
\ifdefined\isarxiv

\documentclass[10pt,a4paper]{dukenlp}
\usepackage[numbers]{natbib}
\else

\documentclass{article} 
\usepackage{iclr2026_conference,times}
\fi

\usepackage{tkz-euclide}
\usepackage{dsfont}
\usepackage{amsmath}
\usepackage{amsthm}
\usepackage{tikz}
\usepackage{multirow}
\usepackage{tikz-cd}
\usepackage{subcaption}
\usepackage[ruled,vlined]{algorithm2e}
\usepackage[most]{tcolorbox}
\usepackage{tabularx}

\usepackage{booktabs}
\usepackage{tabularray}
\usepackage{ltablex}
\usepackage{array}   
\usepackage{arydshln}   
\usepackage{graphicx}
\usepackage{grffile}
\usepackage{wrapfig,epsfig}
\usepackage{url}
\usepackage{xcolor}
\usepackage{epstopdf}
\usepackage{enumitem}

\usepackage{bbm}
\usepackage{fontawesome5} 
\usepackage{bxcoloremoji}

\usepackage{epigraph}
\setlength\epigraphwidth{\textwidth}
\setlength\epigraphrule{0pt}

\allowdisplaybreaks

\ifdefined\isarxiv

\usepackage[capitalize,noabbrev,sort]{cleveref}  

\else

\usepackage{amssymb}  

\usepackage{microtype}
\usepackage{hyperref}
\definecolor{mydarkblue}{rgb}{0,0.08,0.45}
\hypersetup{colorlinks=true, citecolor=mydarkblue,linkcolor=mydarkblue}
 
\usepackage[capitalize,noabbrev,sort]{cleveref}  

\renewcommand{\cite}{\citep}

\fi

\theoremstyle{plain}
\newtheorem{theorem}{Theorem}[section]

\newtheorem{definition}[theorem]{Definition}

\newtheorem{proposition}[theorem]{Proposition}
\newtheorem{corollary}[theorem]{Corollary}

\newtheorem{remark}[theorem]{Remark}

\newtheorem{hypothesis}[theorem]{Hypothesis}

\newcommand{\wt}{\widetilde}

\renewcommand{\d}{\mathrm{d}}

\renewcommand{\tilde}{\wt}



\makeatletter
\newcommand*{\RN}[1]{\expandafter\@slowromancap\romannumeral #1@}
\makeatother

\iclrfinalcopy
\begin{document}

\title{
The Geometry of Reasoning: Flowing Logics in Representation Space
}

\ifdefined\isarxiv

\date{}

\author{
\textbf{Yufa Zhou}$^{*}$,
  \textbf{Yixiao Wang}$^{*}$, 
  \textbf{Xunjian Yin}$^{*}$,
  \textbf{Shuyan Zhou},
  \textbf{Anru R. Zhang}
  \\
  Duke University\\
  \texttt{\{yufa.zhou,yixiao.wang,xunjian.yin,shuyan.zhou,anru.zhang\}@duke.edu}\\
  $^{*}$\textit{Equal contribution.}
}

\else

\author{\textbf{Yufa Zhou}$^{*}$,
  \textbf{Yixiao Wang}$^{*}$, 
  \textbf{Xunjian Yin}\thanks{Equal contribution.} \,,
  \textbf{Shuyan Zhou},
  \textbf{Anru R. Zhang}
  \\
  Duke University\\
  \texttt{\{yufa.zhou,yixiao.wang,xunjian.yin,shuyan.zhou,anru.zhang\}@duke.edu}
}

\maketitle 
\fi

\ifdefined\isarxiv
  \maketitle
  \begin{abstract}
    We study how large language models (LLMs) ``think'' through their representation space. 
We propose a novel geometric framework that models an LLM's reasoning as flows---embedding trajectories evolving where logic goes. 
We disentangle logical structure from semantics by employing the same natural deduction propositions with varied semantic carriers, allowing us to test whether LLMs internalize logic beyond surface form. 
This perspective connects reasoning with geometric quantities such as position, velocity, and curvature, enabling formal analysis in representation and concept spaces. 
Our theory establishes: (1) LLM reasoning corresponds to smooth flows in representation space, and (2) logical statements act as local controllers of these flows' velocities. 
Using learned representation proxies, we design controlled experiments to visualize and quantify reasoning flows, providing empirical validation of our theoretical framework. 
Our findings indicate that training solely via next-token prediction can lead LLMs to internalize logical invariants as higher-order geometry in representation space, challenging the ``stochastic parrot'' argument.
Experiments across Qwen and LLaMA model families further suggest the presence of a general, possibly universal, representational law underlying machine understanding and human linguistic regularities, largely independent of specific training recipes or model architectures. 
Our work serves as both a conceptual foundation and practical tools for studying reasoning phenomena, offering a new lens for interpretability and formal analysis of LLMs' behavior. \\

\faGithub \ \ \ \textbf{Code:} \url{https://github.com/MasterZhou1/Reasoning-Flow} \\
\coloremojicode{1F917} \textbf{Dataset:} \url{https://huggingface.co/datasets/MasterZhou/Reasoning-Flow}

\epigraph{“Reasoning is nothing but reckoning.”}{\textit{--- Thomas Hobbes}}

  \end{abstract}
  \thispagestyle{empty}


\else

\begin{abstract}
    
\end{abstract}

\fi

\section{Introduction}\label{sec:intro}

The geometry of concept space, i.e., the idea that meaning can be represented as positions in a structured geometric space, has long served as a unifying perspective across AI, cognitive science, and linguistic philosophy~\cite{gardenfors2004conceptual, rickard2006concept, gardenfors2014geometry}. Early work in this tradition was limited by the absence of precise and scalable semantic representations. With the rise of large language models (LLMs)~\cite{gpt4o,openai2025gpt5,llama3,deepseek-r1,qwen3}, we revisit this geometric lens: pretrained embeddings now offer high-dimensional vector representations of words, sentences, and concepts~\cite{mikolov2013efficient,mikolov2013linguistic,arora2018linear,neelakantan2022text, zhang2025qwen3,lee2025gemini,kozlowski2025semantic}, enabling geometric analysis of semantic and cognitive phenomena at scale.

A seminal recent work~\cite{modell2025origins} formalizes the notion that learned representations in LLMs lie on low-dimensional concept manifolds. Building on this view, we hypothesize that reasoning unfolds as a trajectory, potentially a flow, along such manifolds. To explore this idea, we draw on classical tools from differential geometry~\cite{menger1928untersuchungen,hicks1965notes,guggenheimer2012differential,do2016differential} and propose a novel geometric framework for analyzing reasoning dynamics in language models. 
Concretely, we view reasoning as a context-cumulative trajectory in embedding space: at each step, the reasoning prefix is extended, and the model’s representation is recorded to trace the evolving flow (\cref{fig:reasoning_flow_3d,fig:reasoning_flow_2d}). 
Our results suggest that LLM reasoning is not merely a random walk on graphs~\cite{wang2024understanding,minegishi2025topology}. 
At the isolated embedding level, trajectories exhibit stochasticity reminiscent of graph-based views; however, when viewed cumulatively, a structured flow emerges on a low-dimensional concept manifold, where local velocities are governed by logical operations. 
To the best of our knowledge, this is the first work to formalize and empirically validate such a dynamical perspective, offering quantitative evidence together with broad insights and implications.
We further rigorously define and formalize concept, logic, and representation spaces (\cref{fig:mapping}), and relate them through carefully designed experiments. 
See the full theoretical definitions of flow, velocity, curvature, and the associated spaces in \cref{sec:reasoning_flow}.

From Aristotle’s syllogistics to Frege’s predicate calculus and modern math foundations \cite{bochenski1961history,copi2016introduction,enderton2001mathematical}, formal logic isolates validity as form independent of content. Wittgenstein’s \emph{Tractatus} sharpened this view—“the world is the totality of facts, not of things” \cite{wittgenstein1922tractatus}—underscoring logical form as the substrate of language and reality. In this spirit, we treat logic as a carrier-invariant skeleton of reasoning and test whether LLMs, trained on massive corpora, have internalized such structural invariants on the embedding manifold, effectively rediscovering in data the universal logic that took humans two millennia to formalize.  
We deliberately construct a dataset that isolates formal logic from its semantic carriers (e.g., topics and languages) to validate our geometric perspective. 

Our experiments, conducted with Qwen3 \cite{qwen3} hidden states across model families on our newly constructed dataset, reveal that LLMs exhibit structured logical behavior. In the original (0-order) representation space, semantic properties dominate, with sentences on the same topic clustering together. However, when we analyze differences (1- and 2-order representations), logical structure emerges as the dominant factor. 
Specifically, we find that velocity similarity and Menger curvature similarity remain highly consistent between flows sharing the same logical skeleton, even across unrelated topics and languages.
In contrast, flows with different logical structures exhibit lower similarity, even when they share the same semantic carrier.
Our further random shuffle experiment shows that permuting the order of logical statements collapses velocity and curvature yet preserving positional similarity, indicating that logical structure is encoded in higher-order geometry rather than in raw representation space.
These findings provide quantifiable evidence supporting our hypothesis that logic governs the velocity of reasoning flows, challenging the “stochastic parrot” view \cite{bender2021dangers} that LLMs lack genuine understanding. Our results suggest a general, possibly universal, constraint on language representations, aligning with the Platonic Representation Hypothesis \cite{huh2024position,jha2025harnessing}, which posits that neural networks can learn shared underlying world representations largely independent of architecture and training data.

\begin{figure}
    \centering
    \begin{subfigure}[t]{0.32\textwidth}
        \centering
        \includegraphics[width=\linewidth]{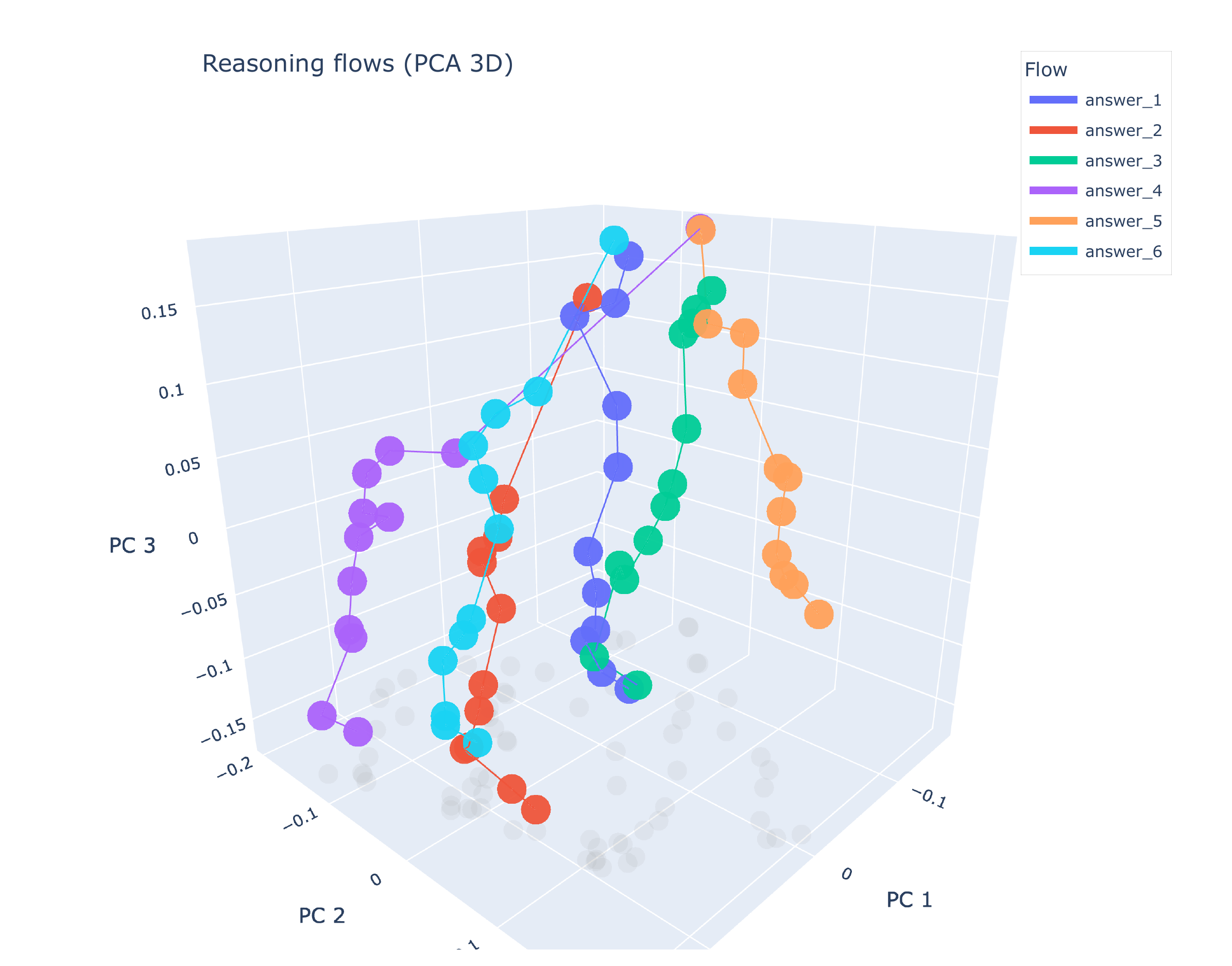}
    \caption{
    Reasoning flows visualized using PCA in 3 dimensions.
    }
    \label{fig:reasoning_flow_3d}
    \end{subfigure}
    \hfill
    \begin{subfigure}[t]{0.32\textwidth}
        \centering
        \includegraphics[width=\linewidth]{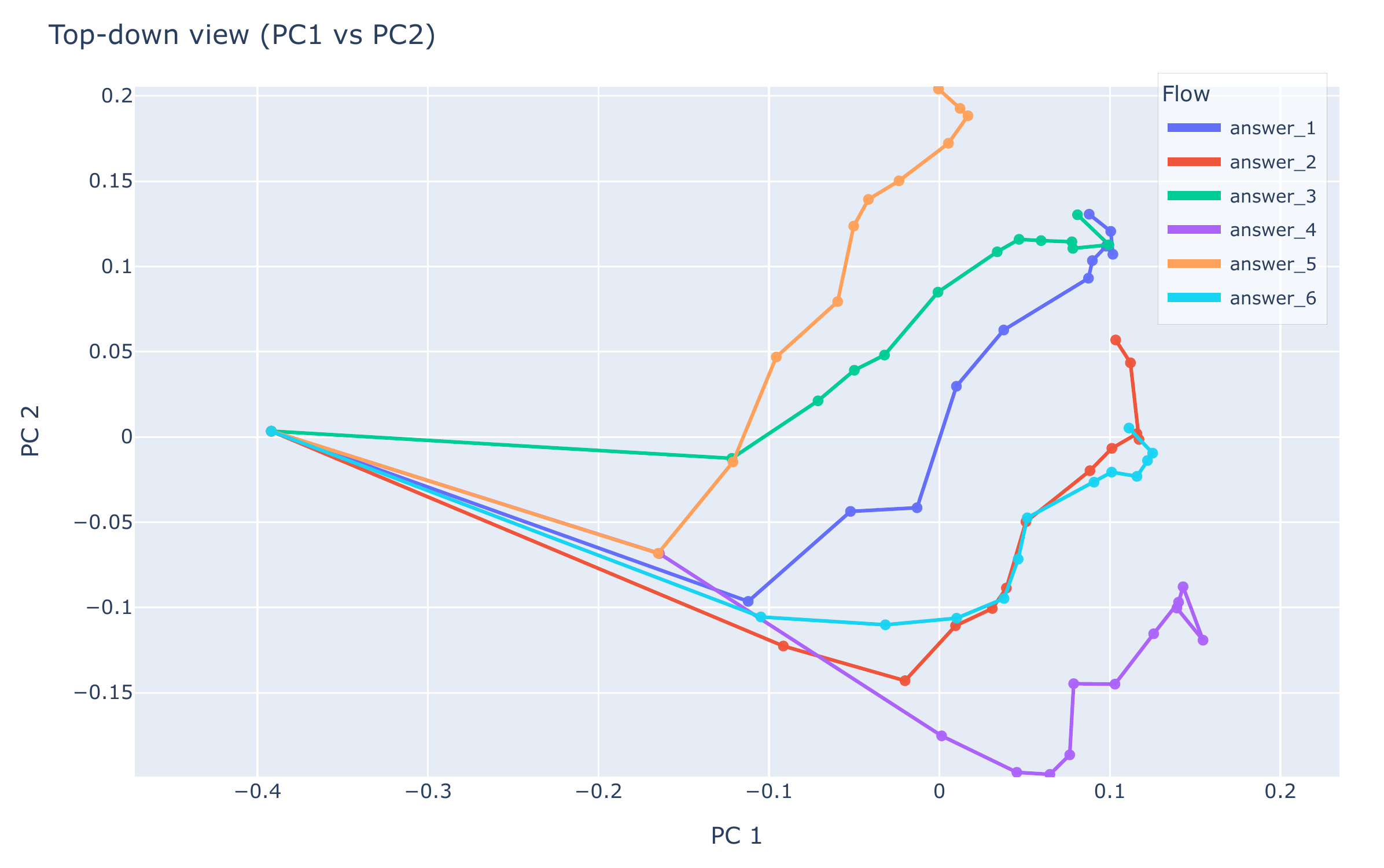}
    \caption{
    Reasoning flows visualized using PCA in 2 dimensions. 
    }
    \label{fig:reasoning_flow_2d}
    \end{subfigure}
    \hfill
    \begin{subfigure}[t]{0.32\textwidth}
    \centering
    \resizebox{\linewidth}{!}{%
    \begin{tikzpicture}
            \node (x) at (0,2) {$\mathcal{X}$};
            \node (c) at (-2,0) {$\mathrm{Curves}(\mathcal{C})$};
            \node (r) at (2,0) {$\mathrm{Curves}(\mathcal{R})$};
            \node (lf) at (-2,-2) {$\mathcal{L}_{\mathrm{form}}$};
            \node (lr) at (2,-2) {$\mathcal{L}_{\mathrm{rep}}$};

            \draw[->] (x) -- (c) node[midway,above left] {$\Gamma$};
            \draw[->] (x) -- (r) node[midway,above right] {$\Psi$};
            \draw[->] (c) -- (r) node[midway,above] {$A=\Psi\circ \Gamma^{-1}$};
            \draw[->] (c) -- (lf) node[midway,left] {$F_\mathcal{C}$};
            \draw[->] (r) -- (lr) node[midway,right] {$D_\mathcal{R}$};

            \draw[<->, dashed] (lf) -- (lr) node[midway,above] {?};
        \end{tikzpicture}
        }
        \caption{Schematic illustration of mappings between spaces. }
        \label{fig:mapping}
    \end{subfigure}
    \caption{
    Reasoning Flow. (a–b) Visualizations on a selected problem from MATH500 with six distinct answers. (c) Our geometric framework of mapping relationships among input space $\mathcal{X}$, concept space $\mathcal{C}$, logic space $\mathcal{L}$, and representation space $\mathcal{R}$. See \cref{sec:reasoning_flow} for more details.
    }
    \label{fig:reasoning_flow_and_mapping}
\end{figure}

While interpretability research on LLMs has made substantial empirical progress~\cite{anthropic_transformer_circuits,rai2024practical,nanda2023progress,singh2024rethinking,madsen2024interpretability,ferrando2024primer}, rigorous theoretical understanding remains comparatively limited, with only a few recent efforts in this direction~\cite{jiang2023uncovering,park2024the,modell2025origins,park2025the}. Our work contributes to this emerging line by introducing a mathematically grounded framework with formal definitions and analytic tools for quantifying and analyzing how LLMs behave and reason. 
We hope our theory and empirical evidence open a new perspective for interpretability community and spark practical applications.
Our contributions are:  
\begin{itemize}[leftmargin=*]
    \item We introduce a geometric perspective that models LLM reasoning as flows, providing formal definitions and analytic tools to study reasoning dynamics.  
    \item We design a formal logic dataset that disentangles logical structure from semantic surface, enabling direct tests of whether LLMs internalize logic beyond semantics.  
    \item We empirically validate our framework through experiments and analysis, demonstrating its utility and offering practical insights.
\end{itemize}

\section{Related Work}\label{sec:related_work}


\paragraph{Concept Space Geometry.}
The Linear Representation Hypothesis (LRH) proposes that concepts align with linear directions in embedding space, a view supported by theoretical analyses and empirically validated in categorical, hierarchical, and truth–false settings \cite{park2024the,jiang2024on,park2025the,jiang2023uncovering,marks2024the}. However, strict linearity is limited: features may be multi-dimensional or manifold-like, as seen in concepts like colors, years, dates, and antonym pairs. \cite{engels2025not,modell2025origins,kozlowski2025semantic}. Other works emphasize compositionality, showing that concepts require explicit constraints or algebraic subspace operations to compose meaningfully \cite{stein2024towards,wang2023concept}. At a broader scale, hidden-state geometry follows expansion–contraction patterns across layers and exhibits training trajectories whose sharp shifts coincide with emergent capabilities and grokking \cite{valeriani2023geometry,park2024emergence,liu2022towards}. Sparse autoencoders further reveal multi-scale structure, from analogy-like “crystals” to anisotropic spectra \cite{li2025geometry}. Collectively, these results suggest that concept spaces are locally linear yet globally curved, compositional, and dynamic, motivating our perspective of reasoning as flows on such manifolds.

\paragraph{Understanding Reasoning Phenomena.}
LLMs benefit from \textit{test-time scaling}, where allocating more inference compute boosts accuracy on hard tasks~\cite{snell2025scaling}. Explanations span expressivity—CoT enabling serial computation~\cite{li2024chain}, reasoning as superposed trajectories~\cite{zhu2025reasoning}, and hidden planning in scratch-trained math models~\cite{ye2025physics}—to inductive biases, where small initialization favors deeper chains~\cite{yao2025an}. Structural analyses view reasoning as path aggregation or graph dynamics with small-world properties~\cite{wang2024understanding,minegishi2025topology}, while attribution highlights key “thought anchors”~\cite{bogdan2025thought}. Empirical work shows inverted-U performance with CoT length and quantifiable reasoning boundaries~\cite{wu2025more,chen2024unlocking}, and embedding-trajectory geometry supports OOD detection~\cite{wang2024embedding}. \citet{cai2025much} shows RL is more sensitive to backtracking structure than content correctness, clarifying SFT–RL interplay in reasoning. Moving beyond text, latent-reasoning methods scale compute through recurrent depth, continuous “soft thinking,” and latent CoT for branch exploration and self-evaluation~\cite{zhang2025soft,geiping2025scaling,hao2024training,wang2025latent}. Applications exploit these insights for steering and efficiency: steering vectors and calibration shape thought processes~\cite{venhoff2025understanding,chen2025seal}, manifold steering mitigates overthinking~\cite{huang2025mitigating}, and adaptive indices enable early exit~\cite{fu2024efficiently}.

\section{Preliminaries}

\subsection{Large Language Models}

Let $\mathcal{V}$ denote a finite vocabulary of tokens, and let $\theta$ denote the parameters of a large language model (LLM).  
An LLM defines a conditional probability distribution
$
p_\theta(u_t \mid u_{<t}, P), \quad u_t \in \mathcal{V},
$
where $u_{<t} := (u_1, \ldots, u_{t-1})$ is the prefix of previously generated tokens and $P \in \mathcal{V}^n$ is the tokenized problem prompt.  
At each step $t$, inference proceeds by sampling
$
u_t \sim p_\theta(\cdot \mid u_{<t}, P).
$

\begin{definition}[Chain-of-Thought Reasoning]\label{def:cot_reasoning}
Given a prompt $P \in \mathcal{V}^n$, Chain-of-Thought (CoT) reasoning is an \emph{iterative stochastic process} that generates a sequence
$
\mathcal{U} = (u_1, u_2, \ldots, u_T), \quad u_t \in \mathcal{V},
$
via recursive sampling
$
u_t \sim p_\theta(\cdot \mid P, u_{<t}), \quad t = 1, \ldots, T.
$
\end{definition}

To enable geometric analysis of reasoning, we need a mapping from discrete token sequences into continuous vectors, a transformation that modern LLMs naturally provide.

\begin{definition}[Representation Operator]\label{def:repre_operator}
A Representation Operator is a mapping
$
\mathcal{E} : \mathcal{V}^* \times \mathcal{I} \to \mathbb{R}^d,
$
where $x = (x_1, \ldots, x_n) \in \mathcal{V}^*$ is a token sequence and $\iota \in \mathcal{I}$ is an index specifying the representation type (e.g., a token position, a pooling rule, or an internal layer state).  
The output $\mathcal{E}(x,\iota) \in \mathbb{R}^d$ is the embedding/representation of $x$ under the selection rule $\iota$.
For notational simplicity, we omit the index $\iota$ unless explicitly required.  
\end{definition}

The range of this operator defines the ambient space of reasoning:
\begin{definition}[Representation Space]\label{def:repre_space}
Given a representation operator $\mathcal{E}$, the \emph{representation space} is
$
\mathcal{R} := \{ \mathcal{E}(x) : x \in \mathcal{V}^* \} \subseteq \mathbb{R}^d.
$
Elements of $\mathcal{R}$ are continuous embeddings of discrete language inputs, serving as the foundation and empirical proxy for our geometric analysis of reasoning.
\end{definition}

In practice, $\mathcal{E}$ may be instantiated by a pretrained encoder such as Qwen3 Embedding \cite{zhang2025qwen3} or OpenAI’s \texttt{text-embedding-3-large} \cite{neelakantan2022text}, or by extracting hidden states directly from an LLM.  
Typical choices of $\iota$ include mean pooling, the hidden state of the final token, or a specific layer–position pair within the model~\cite{zhang2025qwen3,lee2025gemini,llm_embeddings_explained,nie2024text}.
We interpret $\mathcal{E}$ as projecting discrete language sequences into a continuous semantic space, potentially lying on a low-dimensional manifold embedded in $\mathbb{R}^d$ \cite{modell2025origins,engels2025not,kozlowski2025semantic}.

\subsection{Menger Curvature}

We adopt \emph{Menger curvature} \citep{menger1928untersuchungen} to quantitatively capture the geometric structure of reasoning flows. 
As a metric-based notion of curvature, Menger curvature simultaneously reflects both angular deviation and distance variation, making it particularly suitable for reasoning trajectories represented as discrete embeddings. We leave more details to \cref{sec:curve}.

\begin{definition}[Menger Curvature]\label{def:men_cur}
Let $x_1,x_2,x_3 \in \mathbb{R}^n$ be three distinct points. The \emph{Menger curvature} of the triple $(x_1,x_2,x_3)$ is defined as the reciprocal of the radius $R(x_1,x_2,x_3)$ of the unique circle passing through the three points: $c(x_1,x_2,x_3) \;=\; \frac{1}{R(x_1,x_2,x_3)}$.
\end{definition}

\section{Reasoning as Geometric Flows in Representation Space}\label{sec:reasoning_flow}

We formalize the view that LLMs reason by tracing trajectories in their representation space. A central question is whether LLMs exhibit intrinsic control over these flows, mirroring the human perspective. We hypothesize semantic content as a curve on a concept manifold, and logical structure acts as a local controller of the trajectory. In this section, we introduce the spaces, maps, and geometric quantities that underpin the paper. We then rigorously formalize this construction and establish the correspondence between the LLM’s representation space and the human concept space.

\subsection{Concept Space and Semantic Trajectories}

\begin{definition}[Concept Space]\label{def:concept_space}
The \emph{concept space} $\mathcal{C}$ is an abstract semantic space that models human-level cognitive structures such as ideas, reasoning states, and problem-solving subtasks. 
\end{definition}

We assume $\mathcal{C}$ is endowed with a smooth geometric structure, allowing continuous trajectories to represent the evolution of conceptual content. 
This assumption can be traced back to the classical insight of William James \citep{james1890principles}, who famously argued that consciousness does not appear to itself ``chopped up in bits.'' Chains or trains of thought are, in his words, inadequate metaphors; instead, ``it is nothing jointed; it flows. A river or a stream are the metaphors by which it is most naturally described.'' 

\begin{definition}[Semantic Subspace as Cognitive Trajectories]\label{def:semantic-trajectory}
Let $\mathcal{M}\subseteq \mathcal{C}$ denote a semantic subspace corresponding to a coherent domain of meaning (e.g., temporal concepts, colors, or causal relations). Let $\mathcal{X}^*$ denote the set of all finite input sequences over $\mathcal{X}$.
We introduce a \emph{trajectory map}
\[
  \Gamma:\ \mathcal{X}^*\to \mathrm{Curves}(\mathcal{M}),\qquad X \mapsto \gamma_X,
\]
that assigns each sentence $X_T=(x_1,\ldots,x_T)$ to a continuous curve $\gamma_X$ within $\mathcal{M}$. 
Formally,
\[
\gamma_X:[0,1]\to \mathcal{M},\qquad s \mapsto \gamma_X(s),
\]
where $s\in[0,1]$ is a continuous progress parameter along the reasoning flow. 
For each discrete prefix $(x_1,\ldots,x_t)$, we align it with the point $\gamma_X\!\left(\tfrac{t}{T}\right)$ on the curve. The curve $\gamma_X$ thus traces the gradual unfolding of semantic content, 
formalizing the view that human cognition operates as a \emph{continuous flow} of concepts rather than as a sequence of isolated symbols.
\end{definition}

We then define the logic space that mirrors the human view of logic.
\begin{definition}[Formal Logical Space]\label{def:logic-space}
The formal logical space $\mathcal{L}$ is an abstract domain that captures structural dynamics of reasoning (natural deduction~\citep{TroelstraSchwichtenberg2000,sep-natural-deduction}; see \cref{def:nd}).
Define the flow operator
\[
F_{\mathcal C}:\ \mathrm{Curves}(\mathcal C)\ \to\ \mathcal{L}_{\mathrm{form}},
\]
which maps a semantic trajectory to its formal counterpart. Semantically different expressions that correspond to the same natural-deduction proposition map to the same element in $\mathcal{L}_{\mathrm{form}}$.
\end{definition}

\subsection{Representation Space}

We use LLM representations/embeddings as proxies to study human cognition and to investigate why LLMs exhibit reasoning phenomenon. 
We build on the multidimensional linear representation hypothesis~\cite{modell2025origins}, which posits that representations decompose linearly into a superposition of features. 
Each feature corresponds to a basis direction within a feature-specific subspace of the embedding space, weighted by a non-negative activation coefficient encoding its salience.

\begin{hypothesis}[Multidimensional Linear Representation Hypothesis~\cite{modell2025origins}]\label{hyp:mlrh}
Let $\mathcal{X}$ denote the input space (e.g., natural language sentences). Let $\mathcal{F}$ be a set of semantic features. For each feature $f \in \mathcal{F}$, let $\mathcal{W}_f \subseteq \mathbb{R}^d$ denote a feature-specific subspace of the embedding space. 

Then the representation map $\Psi : \mathcal{X} \to \mathbb{R}^d$ of an input $x \in \mathcal{X}$ is assumed to take the form
\[
\Psi(x)  =  \sum_{f \in F(x)} \rho_f(x)   w_f(x),
\]
where $F(x) = \{ f \in \mathcal{F} : \rho_f(x) > 0 \}$ is the set of active features in $x$, $\rho_f(x) \in \mathbb{R}_{\ge 0}$ is a non-negative scaling coefficient encoding the intensity or salience of feature $f$ in $x$, $w_f(x) \in \mathcal{W}_f$ is a unit vector ($\|w_f(x)\|_2 = 1$) specifying the direction of feature $f$ within its subspace $\mathcal{W}_f$.  
\end{hypothesis}

Building on this compositional picture, we now move from single inputs to \emph{growing contexts}. As a model reasons, its internal representation evolves. The next definition formalizes this evolution as a cumulative flow in embedding space. 

\begin{definition}[Reasoning Trajectory / Context Cumulative Flow]\label{def:trajectory}
Let $\mathcal{X}$ be the input space, and $\Psi:\mathcal{X} \to \mathbb{R}^d$ the representation map from finite input sequences to the embedding space defined in \cref{hyp:mlrh}. Given a prompt $P \in \mathcal{X}$ and a Chain-of-Thought sequence $X_T=(x_1,\ldots,x_T)$ with $x_t \in \mathcal{X}$, define
\[
S_t := (P,x_1,\ldots,x_t), 
\qquad \tilde{y}_t := \Psi(S_t) \in \mathbb{R}^d, 
\qquad t=1,\ldots,T.
\]
When focusing solely on the reasoning process (ignoring the prompt), we set
\[
y_t := \Psi(X_{t}) \in \mathbb{R}^d, \quad t=1,\ldots,T.
\]
The sequence $Y=[y_1,\ldots,y_T]\in\mathbb{R}^{d\times T}$ is called the \emph{context cumulative flow}. The construction of $Y$ follows Algorithm~\ref{alg:trajectory}.
\end{definition}

The embeddings we observe along a sentence are discrete, while reasoning itself is naturally understood to unfold as a continuous process. It is therefore natural to posit an underlying smooth curve from which these discrete points arise as samples, thereby enabling the use of geometric tools such as velocity and curvature.

\begin{hypothesis}[Smooth Representation Trajectory]\label{hyp:smooth_repre_traj}
The discrete representations $\{y_t\}_{t=1}^T$ produced by context accumulation 
\emph{intrinsically lie} on a $C^1$ curve 
$\widetilde{\Psi}:[0,1]\to\mathbb{R}^d$ satisfying
\[
\widetilde{\Psi}(s_t)=y_t \qquad \text{for an increasing schedule } s_1<\cdots<s_T.
\]
\end{hypothesis}

In other words, the sequence is not merely fitted by a smooth curve, but should be regarded as samples from an underlying smooth trajectory. This assumption is reasonable: in Appendix~\ref{app:smooth-construction} we show an explicit construction of such a $C^1$ trajectory via a relaxed prefix-mask mechanism.  

Once a smooth trajectory exists, we can canonically align symbolic progress (e.g., “how far along the derivation we are”) with geometric progress in representation space. The following corollary formalizes this alignment on domains where the symbolic schedule is well-behaved.

\begin{corollary}[Canonical Alignment]\label{cor:canonical-alignment}
On a domain where $\Gamma$ is injective and $\widetilde{\Psi}$ is defined, 
there exists a canonical alignment
\[
A:\ \mathrm{Curves}(\mathcal{C}) \to \mathrm{Curves}(\mathcal{R}), 
\qquad 
A := \widetilde{\Psi}\circ \Gamma^{-1}.
\]
\end{corollary}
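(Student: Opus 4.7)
The plan is to verify that the composition $\widetilde{\Psi}\circ \Gamma^{-1}$ is a well-defined map between curve spaces on the stated domain, by carefully chasing through the definitions introduced in \cref{def:semantic-trajectory}, \cref{hyp:mlrh}, \cref{def:trajectory}, and the Smooth Representation Trajectory hypothesis. Because the claim is foundational---it asserts the existence of a canonical map rather than a nontrivial quantitative property---the proof reduces to checking (i) that $\Gamma^{-1}$ makes sense as a function on the relevant subset of $\mathrm{Curves}(\mathcal{C})$, (ii) that $\widetilde{\Psi}$ can be applied to the resulting sentence, and (iii) that the output lies in $\mathrm{Curves}(\mathcal{R})$ with no residual dependence on arbitrary choices.

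First, I would restrict attention to the image $\Gamma(\mathcal{X}^*) \subseteq \mathrm{Curves}(\mathcal{M})$, which by the injectivity hypothesis admits a well-defined set-theoretic inverse $\Gamma^{-1}: \Gamma(\mathcal{X}^*) \to \mathcal{X}^*$. For a curve $\gamma \in \Gamma(\mathcal{X}^*)$, set $X := \Gamma^{-1}(\gamma)$; injectivity guarantees $X$ is unique. Next, I would apply the Smooth Representation Trajectory hypothesis to $X$, producing a $C^1$ curve $\widetilde{\Psi}_X:[0,1]\to\mathbb{R}^d$ that interpolates the context cumulative flow $\{y_t\}_{t=1}^T$ from \cref{def:trajectory}. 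Defining $A(\gamma) := \widetilde{\Psi}_X$ completes the construction, and the explicit $C^1$ realization via the relaxed prefix-mask mechanism guarantees that such a $\widetilde{\Psi}_X$ always exists whenever the hypothesis is in force.

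The central technical point is to verify that $A(\gamma)$ actually lands in $\mathrm{Curves}(\mathcal{R})$ rather than merely $\mathrm{Curves}(\mathbb{R}^d)$. The interpolation nodes $y_t = \Psi(X_t)$ all lie in $\mathcal{R}$ by \cref{def:repre_space}, but the smooth interpolant might a priori leave $\mathcal{R}$ if one reads $\mathcal{R}$ as only the countable image of finite token sequences under $\mathcal{E}$. I would resolve this either by invoking the low-dimensional concept manifold view---so that $\mathcal{R}$ is identified with a smooth submanifold of $\mathbb{R}^d$ containing the trajectory, consistent with \cite{modell2025origins,engels2025not,kozlowski2025semantic}---or by interpreting $\mathrm{Curves}(\mathcal{R})$ as $C^1$ curves whose image lies in the closure of $\mathcal{R}$. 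Either convention makes the containment immediate.

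The main obstacle, and the step I expect to require the most care, is the canonicity claim. One must show that $A$ does not depend on arbitrary auxiliary choices, such as the increasing schedule $s_1<\cdots<s_T$ used to realize $\widetilde{\Psi}_X$ as an interpolant, or the reparametrization of $\gamma_X$ on $[0,1]$. The cleanest route is to pass to the quotient of $\mathrm{Curves}(\cdot)$ by orientation-preserving reparametrizations of $[0,1]$, so that both $\Gamma$ and $\widetilde{\Psi}$ descend to maps on parametrization classes. Under this identification, the commutativity of the diagram in \cref{fig:mapping} holds strictly, and $A=\widetilde{\Psi}\circ\Gamma^{-1}$ is genuinely canonical rather than one of many admissible alignments---so that geometric quantities such as velocity direction and Menger curvature pulled back along $A$ are intrinsic to the pair $(\Gamma,\widetilde{\Psi})$ alone.
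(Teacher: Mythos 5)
Your proposal is correct and follows the same (essentially definitional) route the paper takes: the paper offers no explicit proof of this corollary, treating $A:=\widetilde{\Psi}\circ\Gamma^{-1}$ as well-defined by fiat once $\Gamma$ is injective and $\widetilde{\Psi}$ exists, which is exactly your steps (i)--(iii). Your additional observations---that the interpolant may leave $\mathcal{R}$ if $\mathcal{R}$ is read literally as the countable image of $\mathcal{E}$, and that canonicity requires fixing the schedule $s_1<\cdots<s_T$ or passing to reparametrization classes---identify genuine well-definedness issues the paper leaves implicit, and your proposed resolutions (manifold closure of $\mathcal{R}$; quotient by orientation-preserving reparametrizations) are sound.
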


\begin{remark}[Injectivity of $\Gamma$]
In \cref{cor:canonical-alignment}, $\Gamma : \mathcal{X}\to\mathrm{Curves}(\mathcal{C})$ (\cref{def:semantic-trajectory}) maps linguistic inputs to conceptual trajectories. We do not assume that $\Gamma$ is globally injective over all natural language, as different surface forms may express essentially the same conceptual content. For the purpose of defining the alignment map $A = \widetilde{\Psi}\circ\Gamma^{-1}$, it suffices that $\Gamma$ be injective on a \emph{restricted semantic domain} (e.g., equivalence classes of paraphrases). Understanding global injectivity of $\Gamma$ remains a broader open problem in AI, semantics, and cognitive science research.
\end{remark}

\begin{algorithm}[!t]
\caption{
Get Context Cumulative Reasoning Trajectory\\
$\mathcal{V}$: vocabulary space; 
$P \in \mathcal{V}^n$: tokenized problem prompt; 
$T$: number of reasoning steps; 
$x_t \in \mathcal{V}^*$: tokens for step $t$; 
$\mathcal{E}: \mathcal{V}^* \to \mathbb{R}^d$: representation operator; 
$y_t \in \mathbb{R}^d$: embedding at step $t$.
}
\label{alg:trajectory}

\KwIn{$P \in \mathcal{V}^n$; $X = [x_1, \dots, x_T]$ with $x_t \in \mathcal{V}^*$}
\KwOut{$Y = [y_1, \dots, y_T] \in \mathbb{R}^{d \times T}$}

$Y \gets [\,]$, \quad $S_0 \gets [P]$\;

\For{$t \gets 1$ \KwTo $T$}{
    $S_t \gets \texttt{Concat}(S_{t-1}, x_t)$\tcp*{Concatenate with previous context}
    $y_t \gets \mathcal{E}(S_t)$\tcp*{Get embedding of current step}
    Append $y_t$ to $Y$\;
}
\Return $Y$\;
\end{algorithm}

\subsection{Logic as Differential Constraints on Flow}

We now turn from the structural hypotheses of representation trajectories to their \emph{dynamical regulation}. In particular, we view logic not as an external add-on, but as a set of differential constraints shaping how embeddings evolve step by step. This perspective enables us to couple discrete reasoning structure with continuous semantic motion.

\begin{definition}[Representation-Logic Space]
Given a representation trajectory $Y=(y_1,\ldots,y_T)$ defined in \cref{def:trajectory}, define local increments $\Delta y_t:=y_t-y_{t-1}$ for $t\ge 2$. The \emph{representation-logic space} is
\[
\mathcal{L}_{\mathrm{rep}}:=\{\ (\Delta y_2,\ldots,\Delta y_T)\ \mid\ Y\ \text{a context-cumulative trajectory}\ \}.
\]
\end{definition}

The above constructs a discrete object: a sequence of increments capturing how representations change from one reasoning step to the next. To connect this discrete view with a continuous account of semantic evolution, we next introduce the notion of velocity along embedding trajectories.

\begin{definition}[Flow Velocity]\label{def:flow_velocity}
Let $\tilde{\Psi}:[0,1]\to\mathbb{R}^d$ be the continuous embedding trajectory associated with a sentence. 
The \emph{flow velocity} at progress $s$ is defined as
$
v(s)  =  \frac{\d}{\d s} \tilde{\Psi}(s),
$
which captures the instantaneous rate of change of the embedding w.r.t. the unfolding of the sentence.
\end{definition}

By relating local increments in representation space (\cref{def:repre_space}) to the derivative of a continuous trajectory, we can interpret each discrete reasoning step as an integrated outcome of infinitesimal semantic motion.

\begin{proposition}[Logic as Integrated Thought]\label{prop:logic_integral}
By the fundamental theorem of calculus, the cumulative semantic shift between two successive reasoning steps $s_t$ and $s_{t+1}$ is
\[
\int_{s_t}^{s_{t+1}} v(s)\, \d s  \;=\;  \tilde{\Psi}(s_{t+1})-\tilde{\Psi}(s_t)
 \;=\;  y_{t+1}-y_t  \;=\;  \Delta y_{t+1}.
\]
Thus, we could view each representation–logic step as the \emph{integration of local semantic velocity}, which aggregates infinitesimal variations of meaning into a discrete reasoning transition. \cref{def:flow_velocity} captures the central principle that semantic representations evolve continuously, whereas logical steps are inherently discrete: logic acts as the \emph{controller} of semantic velocity, governing both its magnitude and its direction.   
\end{proposition}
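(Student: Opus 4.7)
The plan is to deduce the displayed chain of equalities by a direct application of the fundamental theorem of calculus (FTC) to the smooth representation trajectory $\wt{\Psi}$, followed by two substitutions coming from the interpolation hypothesis and the definition of $\Delta y_{t+1}$. Because the statement is essentially a calculation given the preceding definitions, the proof proposal is short; the substance of the proposition lies in the interpretive remark that follows the identity rather than in any technical step.

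First, I would invoke the Smooth Representation Trajectory hypothesis, which guarantees that $\wt{\Psi}:[0,1]\to\R^d$ is a $C^1$ curve with $\wt{\Psi}(s_t)=y_t$ for the increasing schedule $s_1<\cdots<s_T$. Continuity of the derivative $v(s)=\tfrac{\d}{\d s}\wt{\Psi}(s)$ (\cref{def:flow_velocity}) on the compact interval $[s_t,s_{t+1}]$ then ensures that $\int_{s_t}^{s_{t+1}} v(s)\,\d s$ is a well-defined element of $\R^d$; concretely, one applies the scalar FTC to each coordinate of $\wt{\Psi}$ and reassembles the components into a vector-valued statement.

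Second, coordinatewise FTC yields $\int_{s_t}^{s_{t+1}} v(s)\,\d s = \wt{\Psi}(s_{t+1}) - \wt{\Psi}(s_t)$. Substituting the interpolation identities $\wt{\Psi}(s_{t+1})=y_{t+1}$ and $\wt{\Psi}(s_t)=y_t$ produces $y_{t+1}-y_t$, which by the definition of the representation--logic space equals $\Delta y_{t+1}$. Chaining these three equalities gives exactly the displayed line.

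There is no genuine technical obstacle: the argument is one line of FTC once the hypotheses are unpacked. The point worth emphasizing is conceptual rather than computational, namely that the integral identity licenses the reading of each discrete reasoning step as the accumulation of an underlying continuous semantic velocity, and thus the view of logic as a local controller of that velocity's direction and magnitude. If one later wished to relax the $C^1$ assumption (e.g., to accommodate piecewise-smooth trajectories arising from token-level discontinuities), the Lebesgue form of FTC for absolutely continuous maps would suffice, but this extension is unnecessary under the current smoothness hypothesis.
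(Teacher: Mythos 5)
Your proposal is correct and coincides with the paper's own justification, which is precisely the coordinatewise fundamental theorem of calculus applied to the $C^1$ trajectory $\tilde{\Psi}$, followed by the substitutions $\tilde{\Psi}(s_t)=y_t$ from the Smooth Representation Trajectory hypothesis and $\Delta y_{t+1}=y_{t+1}-y_t$ from the definition of the representation--logic space. You also correctly identify that the content of the proposition is interpretive rather than technical, so nothing further is needed.
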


\textbf{Continuous--Discrete Correspondence as Structural Constraint.}
Having established the conceptual implication of continuous--discrete correspondence, we emphasize that this correspondence reflects the \textit{structural influence of logic on flow dynamics}, rather than a direct mapping from specific inference rules (e.g., conjunction, negation) to vector-field operators.

\textbf{Invariance Under Semantic Variation.}
We now ask: \emph{what properties of reasoning flows should persist under changes in surface semantics?}
We posit that reasoning instances sharing the same natural-deduction skeleton, yet differing in semantic carriers (e.g., topics or languages), should yield reasoning flows whose trajectories exhibit highly correlated curvature (\cref{def:men_cur}).
If logic governs flow velocity---both magnitude and direction---then flows instantiated with different carriers may undergo translations or rotations, reflecting dominant semantic components of the underlying space. Nevertheless, their \textit{overall curvature should remain invariant}.
See a more detailed discussion of curvature in \cref{sec:curve}. Such correlation would indicate that the accumulation of semantic variation produces turning points aligned with both LLM reasoning and human logical thought.

\textbf{Bridging Formal and Representational Logic.}
This theory directly corresponds to the central objective of this paper: clarifying the relationship between the two logical spaces $\mathcal{L}_{\mathrm{form}}$ and $\mathcal{L}_{\mathrm{rep}}$, as illustrated in \cref{fig:mapping}.
Empirical evidence for this claim will be provided later, where we demonstrate cross-carrier similarity in both first-order differences and curvature.

\textbf{Summary.}
In summary, logic functions as the \textit{differential regulator of semantic flow}, discretizing continuous variation into meaningful steps.
For clarity and reference, all mappings and derivational relationships introduced in this subsection are systematically summarized in Appendix~\ref{app:glossary}.

\section{Formal Logic with Semantic Carriers}\label{sec:formal_logic}

\subsection{Logic and Natural Deduction System} 

We construct a dataset of reasoning tasks that instantiate the fundamental logical patterns formalized in \cref{def:nd}. Each task is presented step by step in both formal symbolic notation and natural language. To test whether reasoning relies on surface content or underlying structure, we express the same logical skeletons across diverse \emph{carriers}, e.g., topics such as weather, education, and sports, as well as multiple languages (en, zh, de, ja). This design disentangles logics from linguistic surface and provides a controlled setting for analyzing how reasoning flows behave under varying contexts.
Specifically, because all variants preserve the same abstract reasoning steps but change the words and context, any similarities that persist across carriers must come from the underlying logic, whereas the differences arise from surface semantics.

\begin{definition}[Natural Deduction System {\cite{TroelstraSchwichtenberg2000,sep-natural-deduction}}]\label{def:nd}
A \emph{natural deduction system} is a pair $\mathsf{ND} = (F, R)$ where:
\begin{itemize}[leftmargin=*]
    \item $F$: a formal language of formulas (e.g., propositional or first-order logic),  
    \item $R$: a finite set of inference rules with introduction and elimination rules for each logical constant.  
\end{itemize}
A derivation (or proof) in ND is a tree whose nodes are judgements of the form “a formula is derivable” and whose edges follow inference rules from $R$.  
Temporary assumptions may be introduced in sub-derivations and are \emph{discharged} by certain rules (e.g., $\to I$, $\neg I$).  
Each connective is governed by paired introduction and elimination rules, which together determine its proof-theoretic meaning.
\end{definition}

\subsection{Data Design}

To test whether LLM reasoning trajectories are governed by logical structure rather than semantic content, we generates parallel reasoning tasks that maintain identical logical scaffolding while systematically varying superficial characteristics, specifically topical domain and linguistic realization.

Our dataset construction employs a principled two-stage generation pipeline using GPT-5 \cite{openai2025gpt5}. It proceeds as follows: (i) abstract logical templates are first constructed, followed by (ii) domain-specific and language-specific rewriting. Our final dataset comprises 30 distinct logical structures, each containing between 8 and 16 reasoning steps. Each logical structure is instantiated across 20 topical domains and realized in four languages (English, Chinese, German, and Japanese), yielding a total corpus of 2,430 reasoning sequences.
This controlled design enables direct comparison of trajectories across logical forms and surface carriers, isolating the role of logical structure in embedding dynamics. Full generation prompts and sampled data cases are provided in Appendix~\ref{sec:datagen}.

\section{Play with LLMs}\label{sec:llms}
\begin{table}[!t]
  \centering
  \caption{
  Comparison of reasoning-flow similarities across LLMs. 
We report mean cosine similarity (position, velocity) and Pearson correlation (curvature) 
under 3 grouping criteria: logic, topic, and language. 
Results show that position similarity is dominated by surface carriers, 
while velocity and curvature highlight logical structure as the primary invariant. We also perform a random shuffle baseline on the order of logical inferences using Qwen3 0.6B. See \cref{sec:llms} for more details.
  }
  \label{tab:model_similarity} 
  \begin{tabular}{l ccc ccc ccc}
    \toprule
    \multirow{2}{*}{Model} & \multicolumn{3}{c}{Position Similarity} & \multicolumn{3}{c}{Velocity Similarity} & \multicolumn{3}{c}{Curvature Similarity} \\
    \cmidrule(lr){2-4} \cmidrule(lr){5-7} \cmidrule(lr){8-10}
    &  Logic &  Topic &  Lang. &  Logic &  Topic &  Lang. & Logic &  Topic &  Lang. \\
    \midrule
    Qwen1.5 0.5B & 0.26 & 0.30 & \textbf{0.80} & \textbf{0.17}& 0.07& 0.08& \textbf{0.52}&0.11 &0.15 \\
    Qwen2 0.5B & 0.21 & 0.24 & \textbf{0.85} & \textbf{0.15}& 0.06& 0.07& \textbf{0.52}&0.10 &0.12 \\
    Qwen3 0.6B & 0.26 & 0.30 & \textbf{0.85} & \textbf{0.17}& 0.07& 0.08& \textbf{0.53}&0.11 &0.13 \\
    Qwen3 1.7B & 0.44 & 0.46 & \textbf{0.89} & \textbf{0.19}& 0.08& 0.09& \textbf{0.46}&0.13 &0.15 \\
    Qwen3 4B  &0.33 &0.35 &\textbf{0.86} &\textbf{0.16} &0.07 &0.08 &\textbf{0.53} &0.11 &0.13 \\
    LLaMA3 8B & 0.31 & 0.34& \textbf{0.74}& \textbf{0.15}& 0.06&0.07 &\textbf{0.58} &0.13&0.17 \\
    Random Shuffle & 0.30 & 0.33 & \textbf{0.81} & 0.02& 0.02& \textbf{0.04}& 0.02&0.03 &\textbf{0.04} \\
    \bottomrule
  \end{tabular}
\end{table}

\subsection{Experimental Setup}

We employ the Qwen3~\cite{qwen3} family models and LLaMA3~\cite{llama3}.  
From the final transformer layer (before the LM head), we extract context-dependent hidden states $\{h^{(L)}_i\}$, where $h^{(L)}_i \in \mathbb{R}^d$ denotes the representation at layer $L$ and position $i$.  
Each reasoning step $x_t$ is a set of tokens indexed by $\mathcal{S}_t$, and its step-level embedding is defined by mean pooling:
$
y_t \;=\; \frac{1}{|\mathcal{S}_{t}|}\sum_{i \in \mathcal{S}_{t}} h^{(L)}_i, \quad y_t \in \mathbb{R}^d.
$
The resulting sequence $Y=(y_1,\dots,y_T)$ forms the reasoning trajectory in representation space.

\subsection{Results Analysis}

\textbf{Similarity Table.} We evaluate LLMs by extracting hidden states across our dataset (\cref{sec:formal_logic})
and computing similarities under three criteria: (i) \textbf{Logic}, grouping by deduction skeleton and averaging across topics and languages; 
(ii) \textbf{Topic}; and (iii) \textbf{Language}, both capturing surface carriers. 
This yields position, velocity, and curvature similarities (\cref{tab:model_similarity}).  
Results show that logical similarity is low at zeroth order (position) but becomes dominant at first and second order (velocity and curvature), validating our hypothesis. 
Topic and language exhibit low velocity similarity, suggesting they might occupy orthogonal subspaces; 
by contrast, the high logical similarity at first and second order breaks this orthogonality, indicating that logical structure transcends surface carriers.

\textbf{Random Shuffle.} We include a random shuffle baseline with Qwen3 0.6B in \cref{tab:model_similarity}, where the order of logical steps (i.e., sentences) is randomly permuted and then fed into LLMs. The baseline performs poorly on velocity and curvature, indicating that the sequence order of reasoning is crucial for reasoning flow. In contrast, position similarity remains high, confirming that topic and language effects dominate surface semantics regardless of order. This contrast reinforces our view that higher-order geometric quantities, not raw embeddings, capture the invariant logical structure.

\textbf{Stable Scaling Effect.}
Moreover, \cref{tab:model_similarity} shows two scaling axes: (1) model size (Qwen3 0.6B $\to$ 1.7B $\to$ 4B) and (2) model family (Qwen 1.5/2/3 and LLaMA3, varying in data and architecture). The similarity patterns remain remarkably stable. Increasing scale or switching families does not materially change the similarity measures. This consistency suggests the presence of a more general property of how LLMs internalize logical structure, independent of size or training recipe~\cite{huh2024position}. We view this as a fascinating direction and plan to explore it further.

\begin{figure}[!t]
    \centering
    
    \begin{subfigure}[t]{0.32\textwidth}
        \centering
        \includegraphics[width=\linewidth]{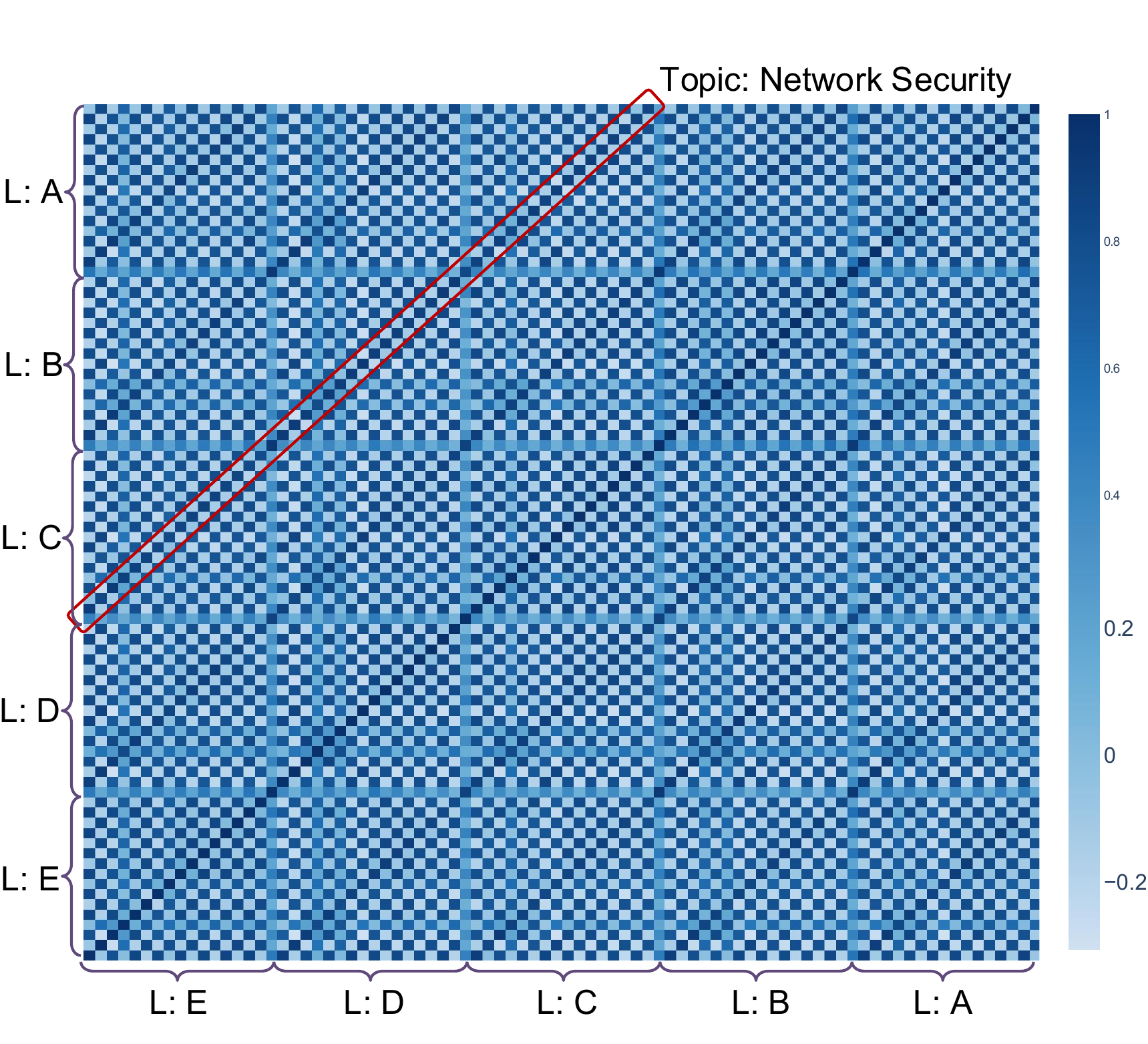}
        \caption{Position Similarity.}
        \label{fig:qwen_0_position}
    \end{subfigure}
    \hfill
    \begin{subfigure}[t]{0.32\textwidth}
        \centering
        \includegraphics[width=0.95\linewidth]{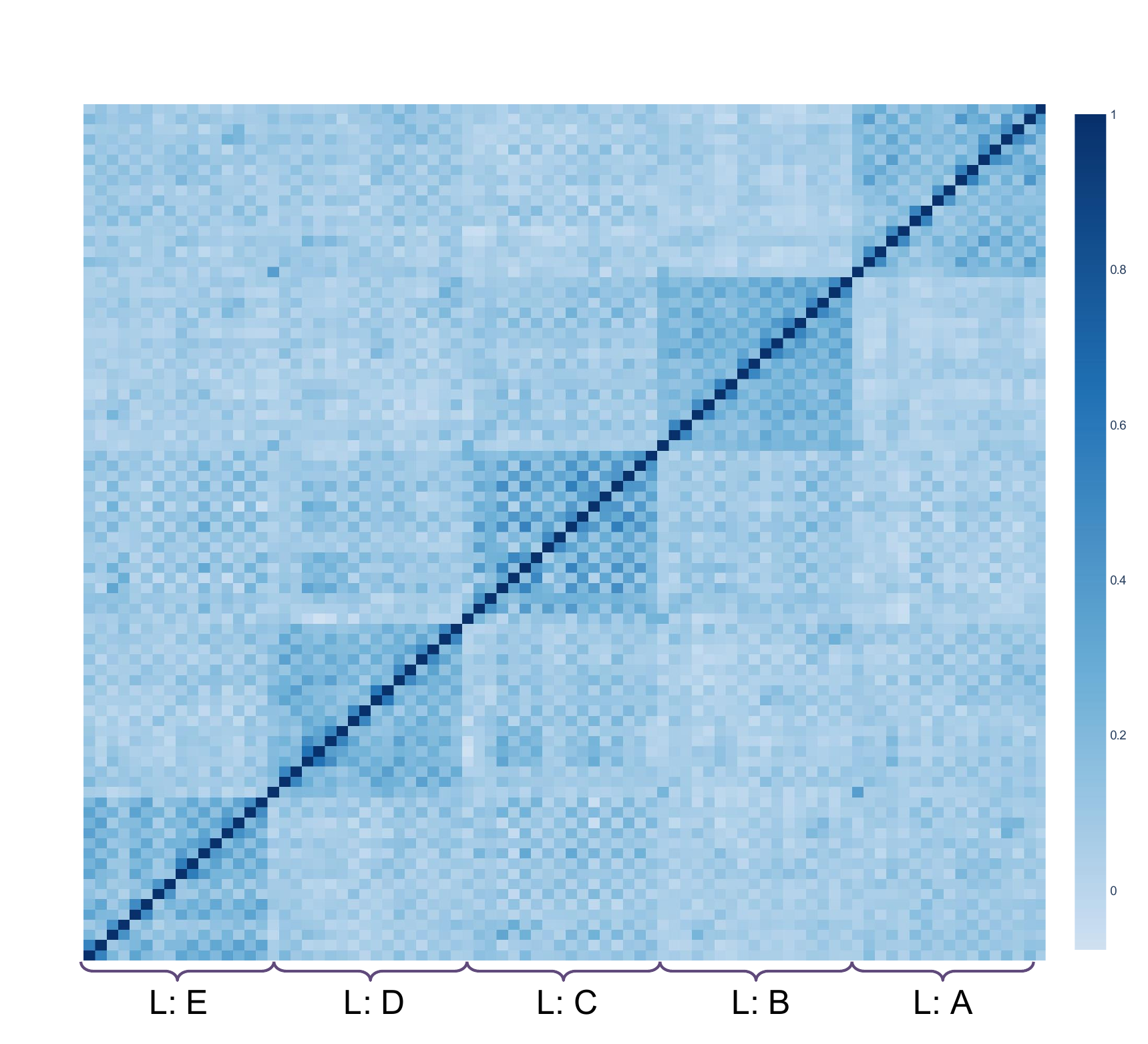}
        \caption{Velocity Similarity.}
        \label{fig:qwen_1_velocity}
    \end{subfigure}
    \hfill
    \begin{subfigure}[t]{0.32\textwidth}
        \centering
        \includegraphics[width=1.025\linewidth]{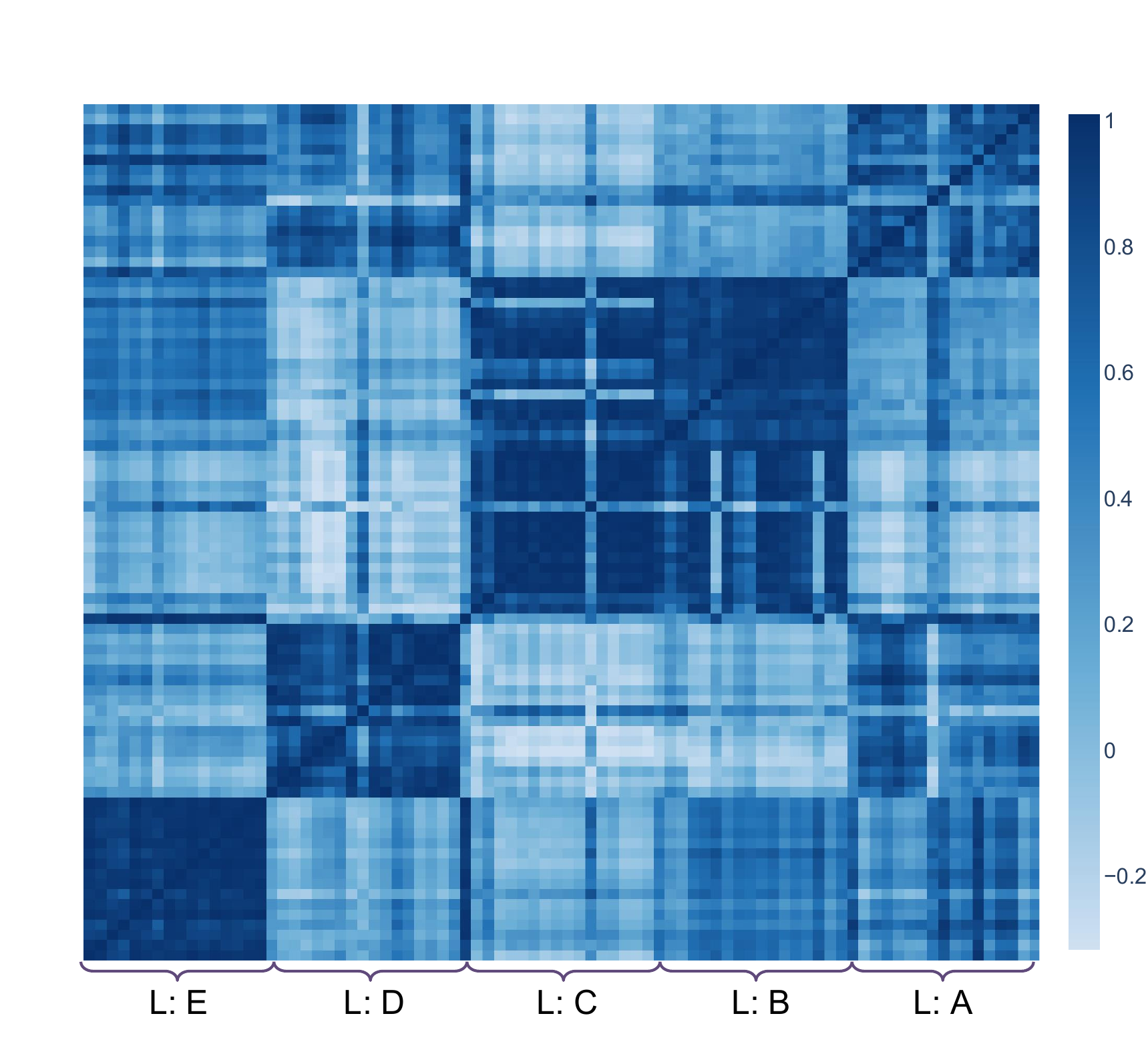}
        \caption{Curvature Similarity.}
        \label{fig:qwen_2_curvature}
    \end{subfigure}    
    \caption{
Similarity of reasoning flows on Qwen3 0.6B.  
Blocks correspond to logic templates (L:A–E) instantiated with different topics and languages.  
(a) Position similarity (mean cosine): diagonals correspond to topics (e.g., Network Security), showing that positions are dominated by surface semantics.
(b) Velocity similarity (mean cosine): semantic effects diminish, and flows with the same logical skeleton align while differing logics diverge.  
(c) Curvature similarity (Pearson): separation is further amplified, with logic emerging as the principal invariant and revealing close similarity between logics B and C.
See \cref{sec:llms} for more details.
    }
    \label{fig:similarity_qwen3_0_6b}
\end{figure}

\textbf{Similarity Heatmap.} 
For visualization, we also analyze Qwen3 0.6B on a subset of our dataset (\cref{fig:similarity_qwen3_0_6b}). 
At the position level, embeddings cluster by topic and language. 
First-order differences reveal logical control: flows sharing the same skeleton align, while differing logics diverge even with identical carriers. 
Second-order curvature further amplifies this separation, and its strong cross-carrier consistency directly supports \cref{prop:logic_integral}, 
confirming that logic governs reasoning velocity. Additional experiments across broader model families are presented in \cref{sec:more_experiment}.  

\textbf{Challenging the Stochastic Parrots Argument.} Together, these results show that LLMs internalize latent logical structure beyond surface form, challenging the stochastic parrots~\cite{bender2021dangers} perspective.
Specifically, the stochastic parrot argument suggests that LLMs do not perform genuine natural language understanding and cannot acquire meaning solely through next-token prediction \cite{bender2021dangers, bender2020climbing}. 
However, we provide evidence that learning from next-token prediction, augmented by standard post-training such as instruction tuning, can be sufficient to elicit semantic structure and support genuine natural language understanding. 
Moreover, we show that logical structure manifests in higher-order geometry of the representation space, which is difficult to explain under a purely surface-form view.

\textbf{Significance.} These findings suggest the existence of deeper, possibly universal constraints linking linguistic form to meaning.
The generality, universality, and model-agnostic nature of the observed patterns align with the Platonic Representation Hypothesis~\cite{huh2024position,jha2025harnessing}, which posits that neural networks trained on different data and objectives nonetheless converge to shared underlying meanings in their representation spaces. 
In this view, LLMs acquire logical structure through next-token prediction because the most effective way to compress knowledge is to internalize and represent its underlying logic~\cite{sutskever2023observation,goldblum2024no}.

\section{Discussion}\label{sec:discussion}

\textbf{Understanding, Not Generation.}
Our theoretical framework focuses exclusively on natural language understanding. In NLP, generation presupposes understanding, so we restrict ourselves to the component that is conceptually cleaner and more amenable to theoretical analysis. Our goal is to articulate a post-hoc, model-agnostic, and training-algorithm-agnostic law describing how LLMs reason.
Explaining why or how these reasoning patterns arise during training or generation is substantially more challenging and falls outside the scope of this work. Because we do not study generation, we also cannot meaningfully relate our geometric measures to generation-specific properties such as task output accuracy.
Finally, uncovering the origins of these patterns would require analyzing the learning process itself---for example, learnability, training dynamics, detailed case studies, and more sophisticated experimental and dataset designs. These important questions are beyond the present scope and are left for future exploration.

\textbf{Contrast with Graph Perspective.}  
Prior works have modeled chain-of-thought reasoning as a graph structure \cite{minegishi2025topology,wang2024understanding}. While this provides a useful perspective, its predictive power is limited: graphs naturally suggest random walks between discrete nodes, which fits the noisy behavior of isolated embeddings but fails to capture the smooth, directed dynamics we observe under cumulative context. Our results in \cref{sec:llms} show that well-trained LLMs learn flows governed by logical structure, transcending the surface semantics of language. Such continuity and logic-driven trajectories cannot be explained within a purely graph-based framework, but arise naturally in our differential-geometric view.  

\textbf{Other Components in Learned Representation.}  
Beyond logical structure, learned representations also encode a wide spectrum of factors such as semantic objects, discourse tone, natural language identity, and even signals of higher-level cognitive behavior.  
Extending our framework to systematically isolate these components and characterize their interactions presents a major challenge for future work.  
A promising direction is to develop methods that disentangle additional attributes, enabling finer-grained insights into how language components co-evolve in representation space.

\textbf{Practical Implications.}  
Our results suggest that reasoning in LLMs unfolds as continuous flows, opening multiple directions. First, trajectory-level control offers principled tools for \emph{steering, alignment, and safety}, extending vector-based interventions to flow dynamics \cite{venhoff2025understanding,chen2025seal,gong2025probing,huang2025mitigating,bereska_mechanistic_2024}. Second, our geometric view provides a formal framework to study abstract language concepts, enabling first-principle analyses of reasoning efficiency, stability, and failure modes. Third, it motivates new approaches to \emph{retrieval and representation}, where embeddings respect reasoning flows rather than mere similarity, potentially improving RAG, reranking, and search \cite{weller2025theoretical,sps}. Finally, it hints at \emph{architectural advances}, as models parameterizing latent flows may enable more efficient reasoning \cite{hao2024training,geiping2025scaling,zhang2025soft,shen2025efficient}.

\section{Conclusion}\label{sec:conclusion}

We introduced a novel geometric framework that models LLM reasoning as smooth flows in representation space, with logic acting as a controller of local velocities. By disentangling logical structure from semantic carriers through a controlled dataset, we showed that velocity and curvature invariants reveal logic as the principal organizing factor of reasoning trajectories, beyond surface form. Our theory and experiments provide both a conceptual foundation and practical tools for analyzing reasoning, opening new avenues for interpretability.

\section*{Acknowledgments}
ARZ was partially supported by NSF Grant CAREER-2203741 and NIH Grant R01HL169347.
This research is supported by Google.org, the Google Cloud Research Credits program for the Gemini Academic Program, and Amazon AGI Labs SF.

\section*{LLMs Usage Statement}

We clarify that LLMs were used solely as auxiliary aids, restricted to refining the manuscript’s exposition for clarity and conciseness.

\ifdefined\isarxiv

\bibliographystyle{plainnat}
\bibliography{ref}

\else

\bibliography{ref}
\bibliographystyle{iclr2026_conference}

\fi

\newpage
\onecolumn
\appendix

{\hypersetup{linkcolor=black}
\tableofcontents
}
\newpage

\begin{center}
	\textbf{\LARGE Appendix }
\end{center}
\section{Additional Related Work}

\paragraph{Mechanistic Interpretability.}
LLMs have exhibited unprecedented intelligence ever since their debut~\cite{openai2022chatgpt}. Yet the underlying mechanisms remain opaque, as transformers are neural networks not readily interpretable by humans—motivating efforts to uncover why such capabilities emerge~\cite{singh2024rethinking,madsen2024interpretability}. Mechanistic Interpretability (MI) pursues this goal by reverse-engineering transformer internals into circuits, features, and algorithms~\cite{rai2024practical,ferrando2024primer,bereska_mechanistic_2024}. The Transformer Circuits program at Anthropic exemplifies this agenda, systematically cataloging reusable computational subroutines~\cite{anthropic_transformer_circuits}. Empirical studies reveal concrete algorithmic mechanisms: grokking progresses along Fourier-like structures~\cite{nanda2023progress}, training can yield divergent solutions for the same task (Clock vs.\ Pizza)~\cite{zhong2023clock}, arithmetic emerges via trigonometric embeddings on helical manifolds~\cite{kantamneni2025language}, and spatiotemporal structure is encoded through identifiable neurons~\cite{gurnee2024language}. 
Beyond circuits, in-context learning and fine-tuning yield distinct representational geometries despite comparable performance~\cite{doimo2024representation}, while safety studies reveal polysemantic vulnerabilities where small-model interventions transfer to larger LLMs~\cite{gong2025probing}.

\paragraph{Formal Logic with LLMs.}
Recent work links transformer computation directly to logic. Log-precision transformers are expressible in first-order logic with majority quantifiers, providing an upper bound on expressivity~\cite{merrill2023logic}, while temporal counting logic compiles into softmax-attention architectures, giving a constructive lower bound~\cite{yang2024counting}. Beyond these characterizations, pre-pretraining on formal languages with hierarchical structure (e.g., Dyck) imparts syntactic inductive biases and improves efficiency~\cite{hu2025between}. Synthetic logic corpora and proof-generation frameworks further strengthen reasoning, though benefits diminish as proofs lengthen~\cite{morishita2024enhancing,xia2025can}. Systematic evaluations, including LogicBench and surveys, highlight persistent failures on negation and inductive reasoning, despite partial gains from “thinking” models and rejection finetuning~\cite{parmar2024logicbench,jiang2025large,liu2025logical}. 
In contrast, our work employs formal logic not as an end task, but as a tool to validate our geometric framework in LLMs' representation space, distinguishing our contribution from prior lines of work.


\textbf{Comparison with \citet{park2025the}.}
While \citet{park2025the} and our work both adopt a geometric lens for semantic and theoretical interpretability, we address fundamentally different phenomena using distinct mathematical frameworks:
\begin{itemize}[leftmargin=*]
    \item \textbf{Objective (Static Semantics vs. Dynamic Reasoning):} 
    The primary motivation of \citet{park2025the} is to map the \textit{static} organization of semantic knowledge. They investigate how fixed concepts (e.g., hierarchies like ``animal'' $\to$ ``dog'') are encoded relative to one another in the representation space. In contrast, our work investigates the \textit{dynamic} process of reasoning. We model how internal representations evolve step-by-step as a model solves a logical problem, framing reasoning not as a static location but as a flow over time.
    \item \textbf{Geometric Nature (Structure vs. Motion):} 
    Consequently, the specific geometries differ. \citet{park2025the} describe a structural geometry where hierarchical relations are encoded via \textit{orthogonal subspaces} and categorical concepts appear as \textit{polytopes} (simplices). Our work describes a kinematic geometry defined by \textit{smooth flows}, analyzing properties of motion such as \textit{velocity} and \textit{curvature} along a reasoning trajectory.
    \item \textbf{Technique \& Methodology:} 
    Methodologically, \citet{park2025the} rely on linear algebraic tools, such as Linear Discriminant Analysis (LDA) and Causal Inner Products, to analyze static word unembeddings. Conversely, we develop a differential-geometric framework to analyze \textit{context-cumulative} trajectories. By generating controlled datasets that isolate logical skeletons from semantic carriers, we show that logic acts as a \textit{differential constraint} on the flow's velocity, rather than determining a static position in the subspace.
\end{itemize}

\textbf{Comparison with \citet{li2025tracing}.}
While \citet{li2025tracing} and our work both analyze the geometry of LLM representations, we diverge in our fundamental objects of study (understanding vs. generation) and in analytical scope (post-hoc laws vs. training dynamics):
\begin{itemize}[leftmargin=*]
    \item \textbf{Scope (Post-hoc Law of Reason vs. Training Dynamics):} Our goal is to articulate a \textit{post-hoc, model-agnostic law} that describes how LLMs reason once fully trained. We treat the model as a fixed dynamical system to derive differential-geometric constraints on its internal flows. In contrast, \citet{li2025tracing} focus on the \textit{origins} of these patterns. They analyze the learning process itself, tracing how geometric properties (such as effective rank) evolve through distinct training phases like ``entropy-seeking'' and ``compression-seeking''. While they ask how representations evolve during \textit{training}, we ask what invariant rules govern their trajectory during \textit{inference}.
    \item \textbf{Domain (Natural Language Understanding vs. Generation):} Our framework focuses exclusively on Natural Language Understanding. Conversely, \citet{li2025tracing} explicitly link geometric measures to \textit{generation-specific} properties, correlating spectral shifts with $n$-gram memorization, cross-entropy loss, and output accuracy (e.g., pass@k). Because we do not study generation, we do not relate our measures to task output metrics, whereas validating these correlations is central to their empirical contribution.
    \item \textbf{Methodology (Theoretical Constraints vs. Empirical Correlations):} Methodologically, \citet{li2025tracing} rely on empirical correlations between spectral metrics (RankMe, $\alpha$-ReQ) and downstream performance metrics across checkpoints. Our work is disjoint from this experimental design; we employ differential geometry to formalize logic as a velocity constraint on the reasoning flow. We leave the questions of learnability and training dynamics, central to \citet{li2025tracing}, outside our scope to focus on the geometry of the reasoning process itself.
\end{itemize}

\section{Additional Experiments}\label{sec:more_experiment}

\paragraph{More Similarity Heatmap.}
We additionally evaluate LLaMA3~\cite{llama3} and more Qwen3~\cite{qwen3} models (1.7B, 4B) to test robustness under the same experimental settings as in \cref{sec:llms}. The results (\cref{fig:similarity_qwen3_1_7b,fig:similarity_qwen3_4b,fig:similarity_llama3_8b}) confirm that our findings generalize across model sizes and families.

\paragraph{$C^1$ Continuity Test.}
In \cref{hyp:smooth_repre_traj}, we posit that the discrete trajectory admits an underlying $C^1$ interpolation. This assumption serves to make the geometric framework well-defined; it does \emph{not} assert that the model’s internal computation unfolds in continuous time. Consequently, a rigorous empirical verification of $C^1$-regularity is impossible: language inputs are discrete, and the sequence index cannot be made infinitesimally small, as differentiability would require.
Instead, we provide finite-difference–based smoothness diagnostics to illustrate that the observed embedding trajectory behaves consistently with this assumption. For a context-cumulative trajectory $Y = [y_1, \dots, y_T]$ (\cref{def:trajectory}), we form velocities $v_t = y_{t+1} - y_t$ and examine their magnitudes $\|v_t\|$. As shown in \cref{fig:speed_cos}, the velocity norm behave consistently across the same MATH500 problem with six different answers presented in \cref{fig:reasoning_flow_and_mapping}, exhibiting no abrupt jumps. This provides visual support for the plausibility of our smooth-flow assumption. A more precise and formal theoretical justification is provided in \cref{sec:geo_foundation}.

\begin{figure}[!h]
    \centering
    
    \begin{subfigure}[t]{0.32\textwidth}
        \centering
        \includegraphics[width=\linewidth]{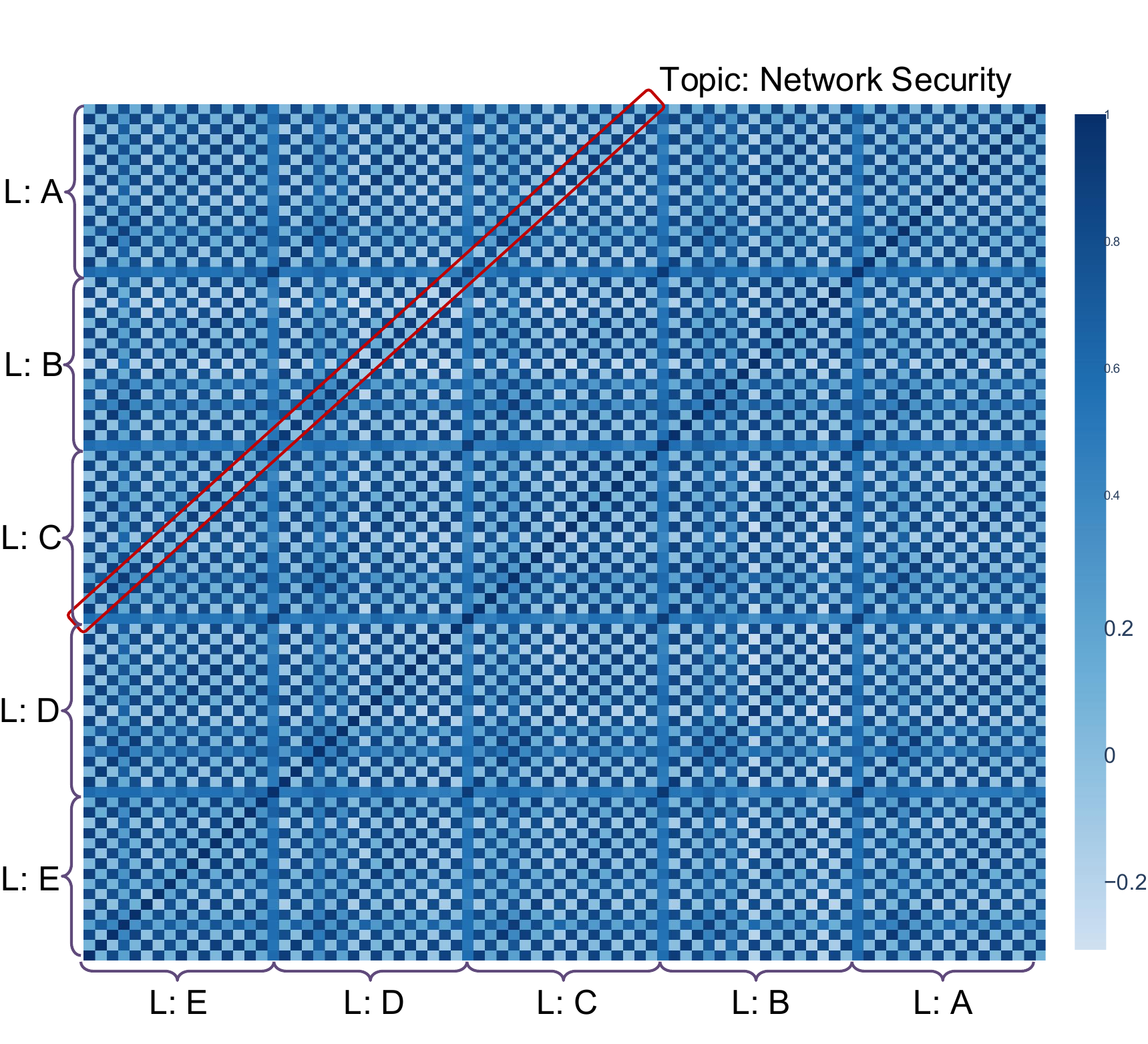}
        \caption{Position Similarity.}
    \end{subfigure}
    \hfill
    \begin{subfigure}[t]{0.32\textwidth}
        \centering
        \includegraphics[width=0.935\linewidth]{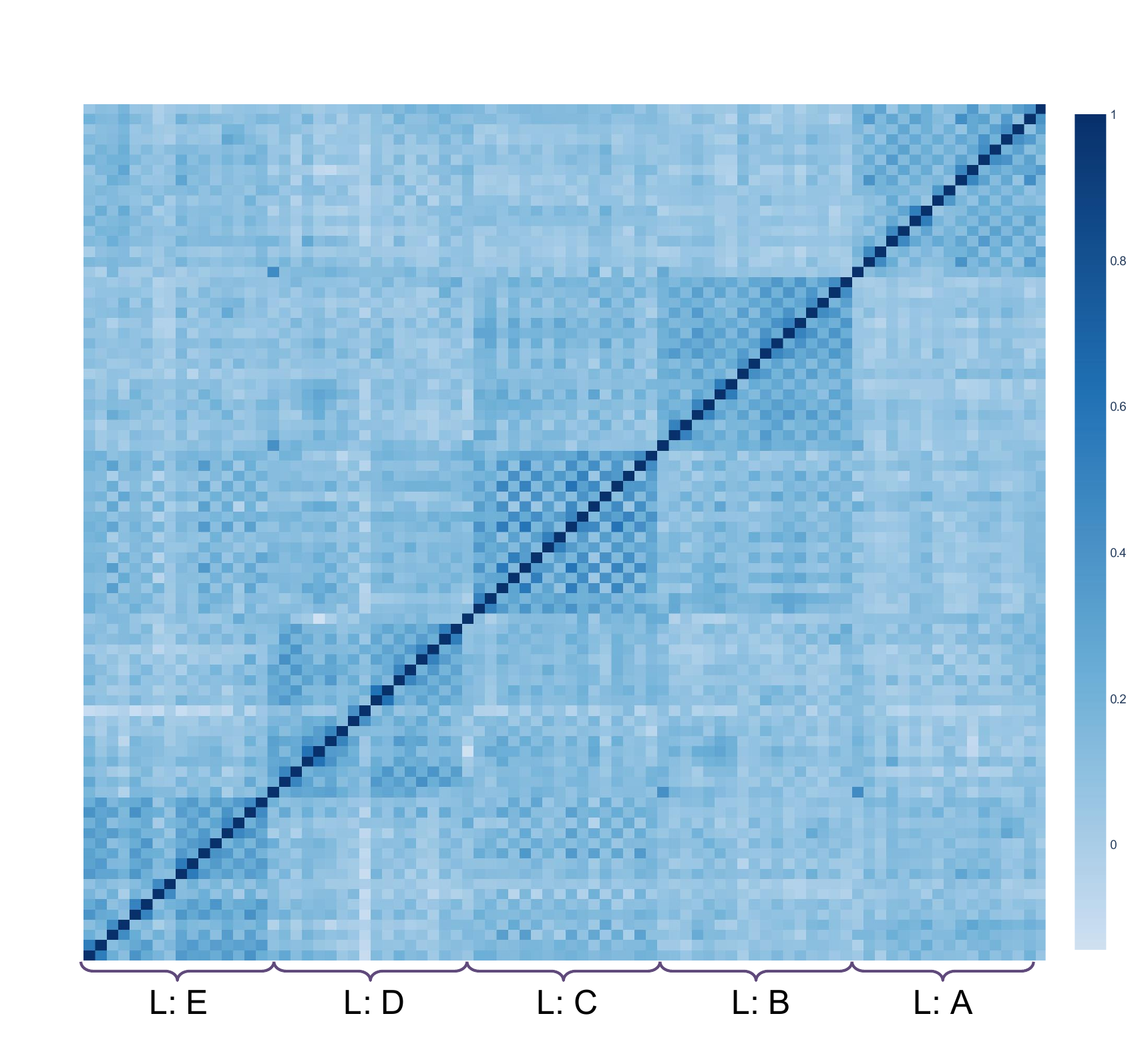}
        \caption{Velocity Similarity.}
    \end{subfigure}
    \hfill
    \begin{subfigure}[t]{0.32\textwidth}
        \centering
        \includegraphics[width=1.01\linewidth]{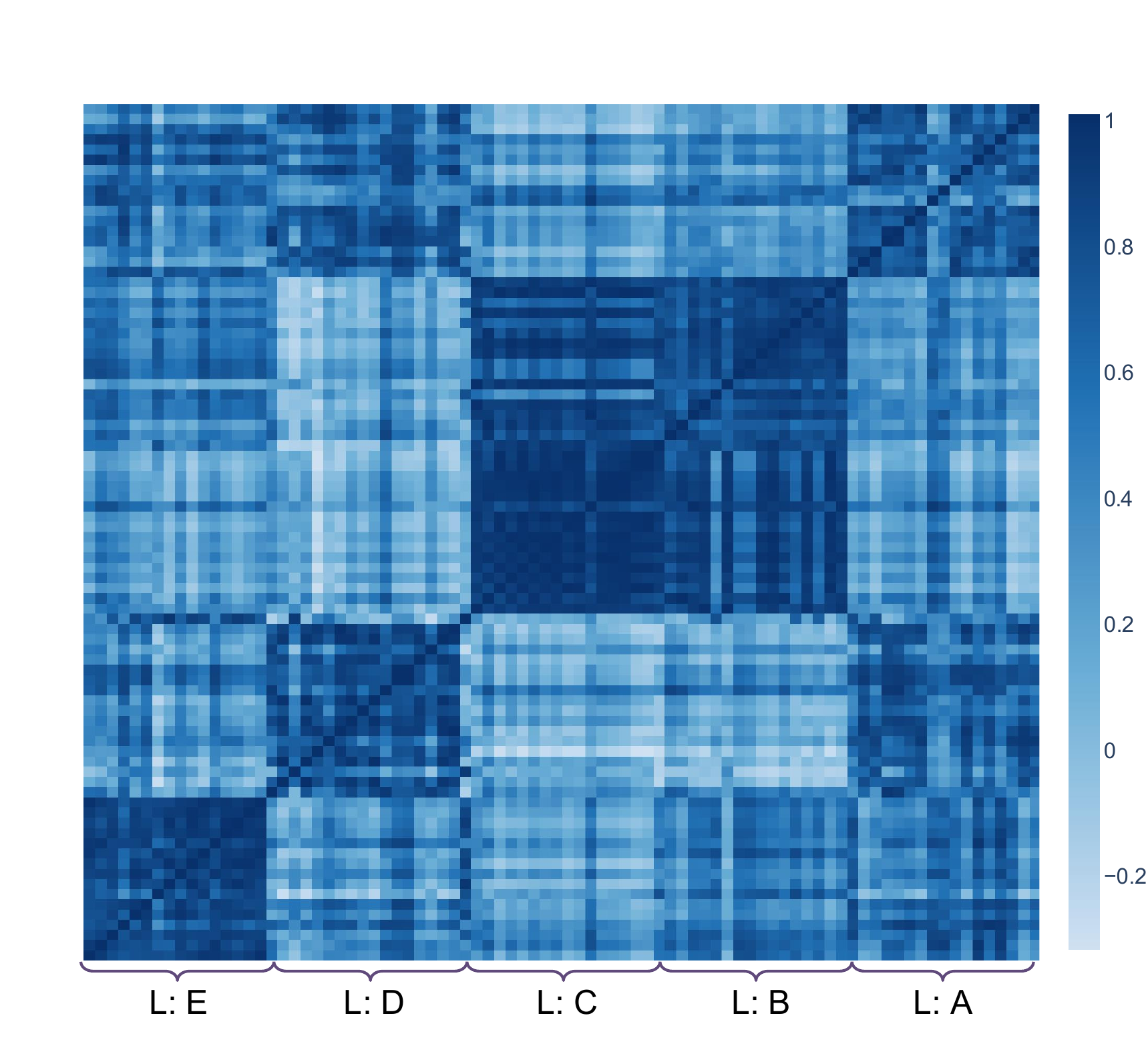}
        \caption{Curvature Similarity.}
    \end{subfigure}    
    \caption{
Similarity of reasoning flows on Qwen3 1.7B.  
    }
    \label{fig:similarity_qwen3_1_7b}
\end{figure}

\begin{figure}[!h]
    \centering
    
    \begin{subfigure}[t]{0.32\textwidth}
        \centering
        \includegraphics[width=\linewidth]{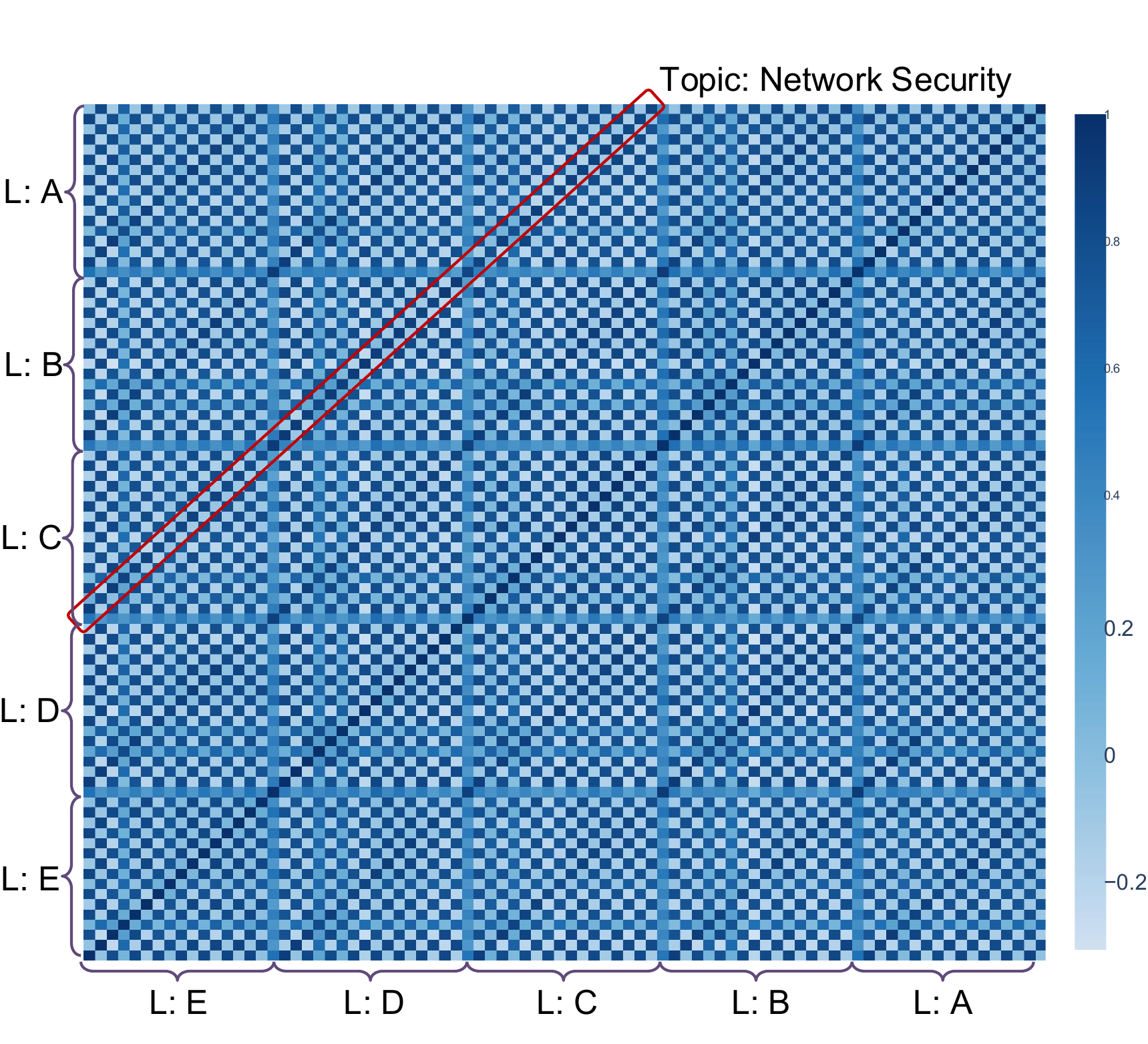}
        \caption{Position Similarity.}
    \end{subfigure}
    \hfill
    \begin{subfigure}[t]{0.32\textwidth}
        \centering
        \includegraphics[width=0.93\linewidth]{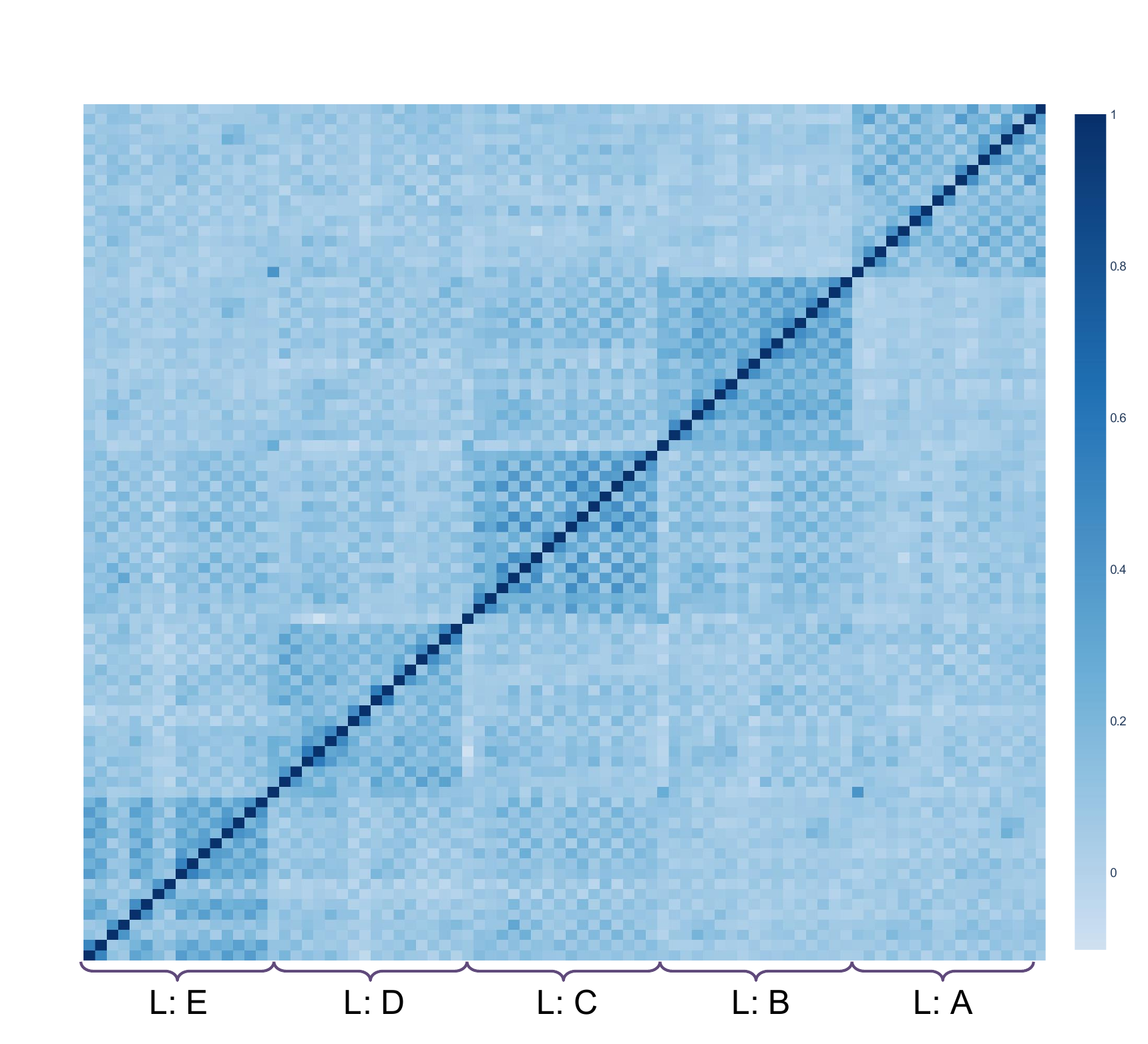}
        \caption{Velocity Similarity.}
    \end{subfigure}
    \hfill
    \begin{subfigure}[t]{0.32\textwidth}
        \centering
        \includegraphics[width=1.01\linewidth]{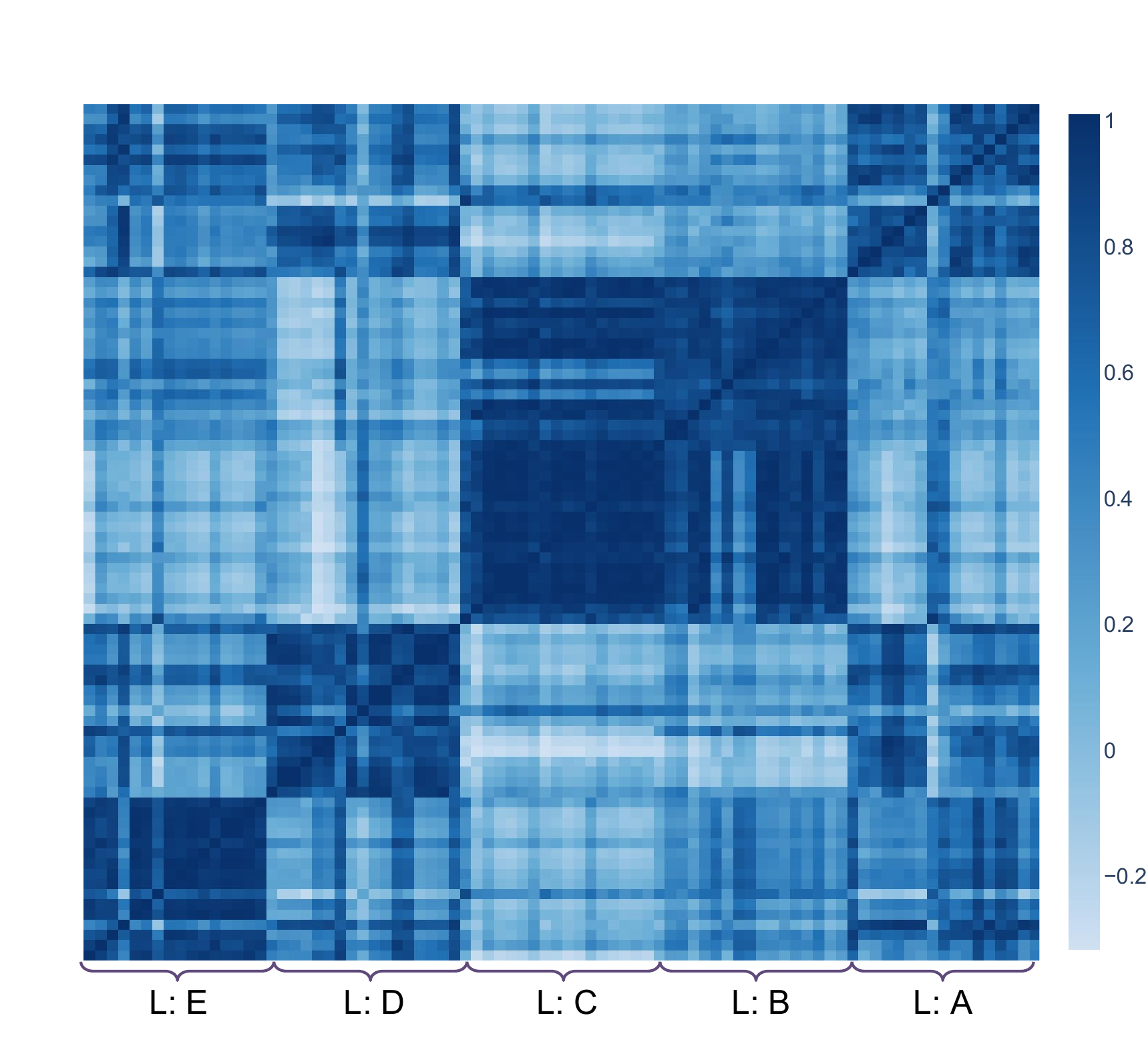}
        \caption{Curvature Similarity.}
    \end{subfigure}    
    \caption{
Similarity of reasoning flows on Qwen3 4B.  
    }
    \label{fig:similarity_qwen3_4b}
\end{figure}

\begin{figure}[!h]
    \centering
    
    \begin{subfigure}[t]{0.32\textwidth}
        \centering
        \includegraphics[width=1.01\linewidth]{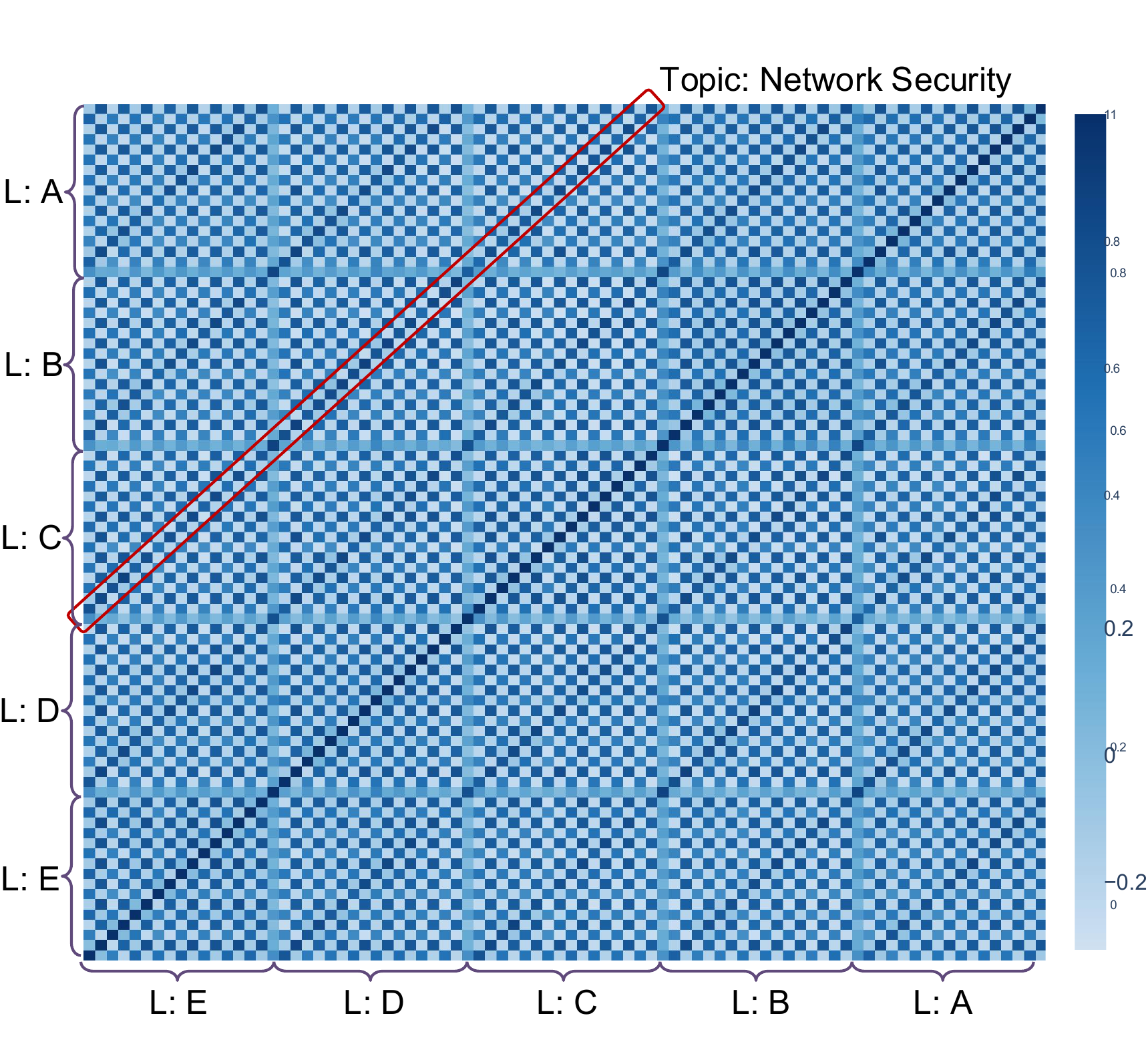}
        \caption{Position Similarity.}
    \end{subfigure}
    \hfill
    \begin{subfigure}[t]{0.32\textwidth}
        \centering
        \includegraphics[width=0.94\linewidth]{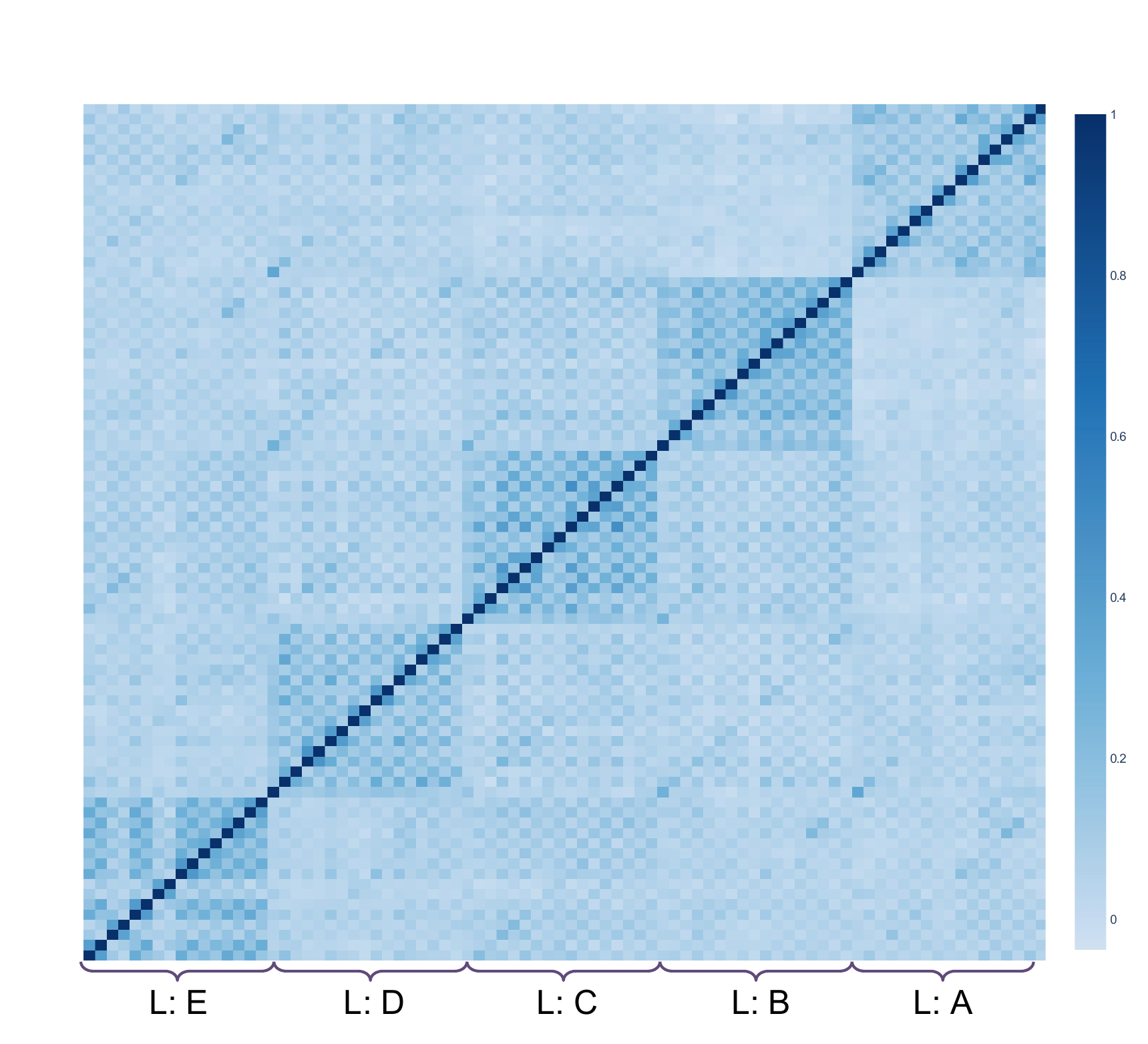}
        \caption{Velocity Similarity.}
    \end{subfigure}
    \hfill
    \begin{subfigure}[t]{0.32\textwidth}
        \centering
        \includegraphics[width=1.03\linewidth]{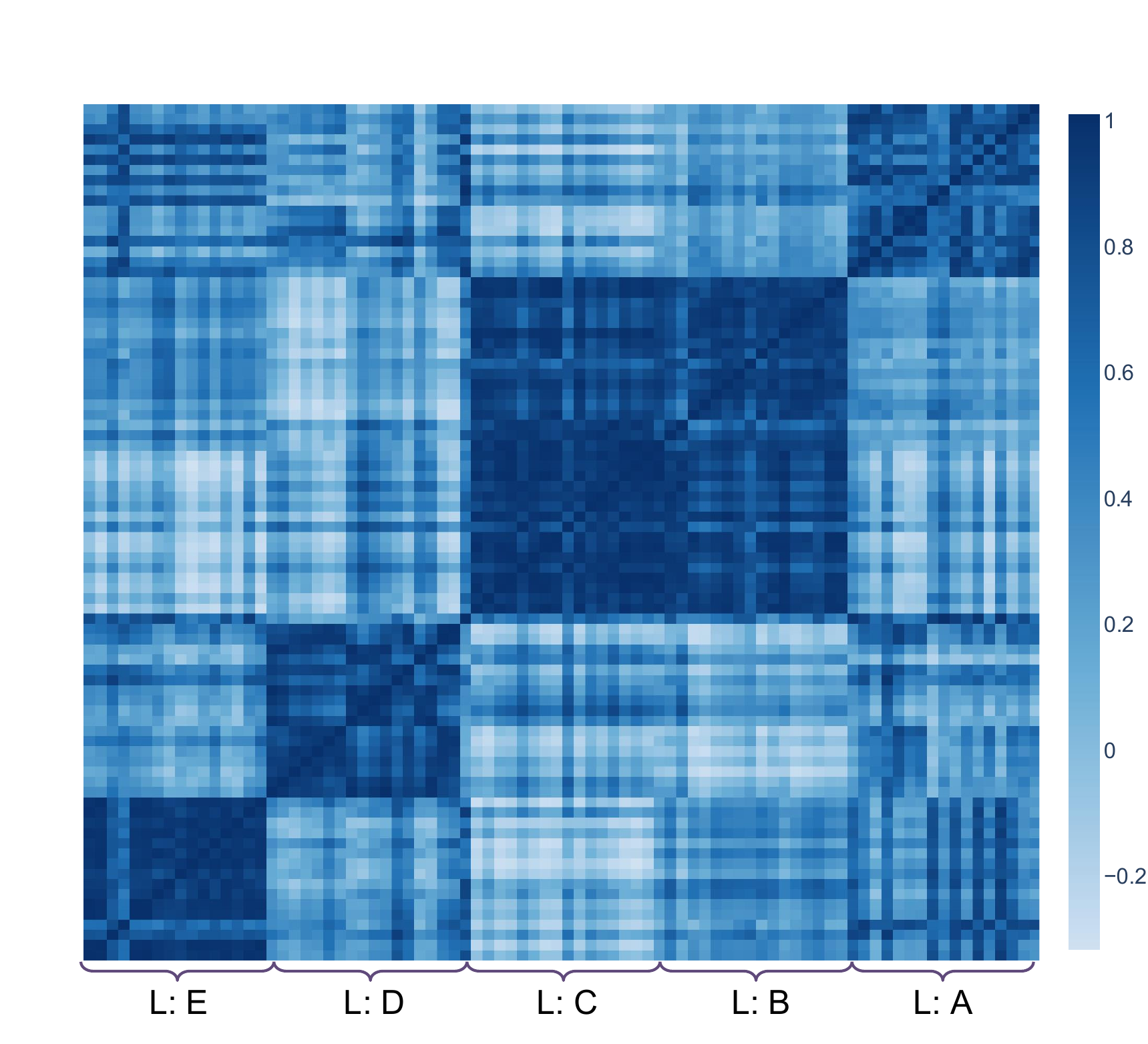}
        \caption{Curvature Similarity.}
    \end{subfigure}    
    \caption{
Similarity of reasoning flows on Llama3 8B.  
    }
    \label{fig:similarity_llama3_8b}
\end{figure}

\begin{figure}[!h]
    
    \centering
        \includegraphics[width=\linewidth]{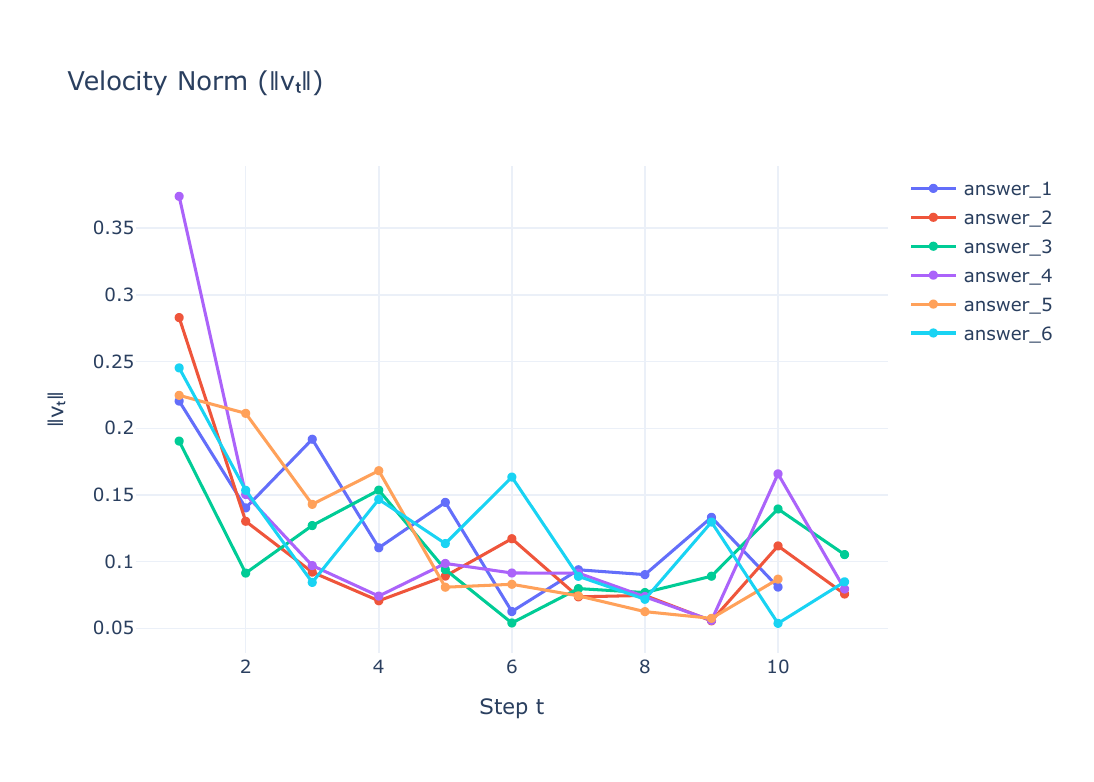}
        \caption{Velocity Norm $\|v_t\|$ for a MATH500 problem and its six answers. }
    \label{fig:speed_cos}
\end{figure}

\section{Symbolic Glossary and Mapping Relations}
\label{app:glossary}

This section is a standalone roadmap that summarizes the spaces, maps, and
commutative structure underlying our geometric view of reasoning.

\subsection{Spaces}
\begin{itemize}
  \item \textbf{Input space} $\mathcal{X}$ (often specialized to a vocabulary
  $\mathcal{V}$): discrete tokens/sentences.

  \item \textbf{Concept space} $\mathcal{C}$: abstract semantic space.
  A sentence $X$ is represented by a smooth \emph{semantic trajectory}
  \[
  \gamma_X:[0,1]\to\mathcal{M}\subseteq\mathcal{C},
  \]
  where $\mathcal{M}$ is a semantic submanifold for a coherent domain of
  meaning.

  \item \textbf{Representation space} $\mathcal{R}\subset\mathbb{R}^d$:
  the model’s embedding space. Each prefix $X_t$ yields
  \[
  y_t=\Psi(X_t)\in\mathbb{R}^d,
  \]
  sampling a continuous \emph{representation trajectory}
  $\tilde{\Psi}:[0,1]\to\mathbb{R}^d$.

  \item \textbf{Formal logical space} $\mathcal{L}_{\mathrm{form}}$:
  symbolic/human logic governed by a natural deduction system
  $\mathsf{ND}=(F,R)$, with formulas $F$ and
  rules $R$. Judgements and rule-based
  derivations live here.

  \item \textbf{Representation-based logical space} $\mathcal{L}_{\mathrm{rep}}$:
  the space of \emph{reasoning increments in the embedding space},
  defined by local variations of the trajectory,
  $\Delta y_t:=y_{t+1}-y_t$. Geometric descriptors such as the
  Menger curvature $\kappa_t$ are evaluated here. This space is non-symbolic, and serves as the model’s internal analogue of logic.
\end{itemize}

\subsection{Primary maps}
\begin{itemize}
  \item \emph{Semantic interpretation}:
  \[
  \Gamma:\ \mathcal{X}\to \mathrm{Curves}(\mathcal{C}),\qquad X\mapsto \gamma_X.
  \]

  \item \emph{Neural representation}:
  \[
  \Psi:\ \mathcal{X}\to \mathrm{Curves}(\mathcal{R}),
  \]
  realized by token embeddings $\mathcal{E}$ and a contextual encoder
  $\Phi$, producing the continuous trajectory $\tilde{\Psi}$ and sampled
  states $Y=(y_1,\ldots,y_T)$.

  \item \emph{Canonical Alignment.}
  \begin{definition}[Canonical alignment map]
  Assume $\Gamma$ and $\Psi$ are injective on the domain of interest. Define
  \[
  A:=\Psi\circ \Gamma^{-1}:\ \mathrm{Curves}(\mathcal{C})\to \mathrm{Curves}(\mathcal{R}).
  \]
  Then $A$ is a bijection between semantic curves and representation
  trajectories, and the top-level diagram \emph{commutes exactly}:
  \[
  A\circ \Gamma = \Psi.
  \]
  \end{definition}

  \item \emph{Flow vs.\ differential to logic.}
  We distinguish a human \emph{flow operator} on concepts from a
  differential operator on representations:
  \[
  F_{\mathcal{C}}:\ \gamma\mapsto \text{(human reasoning flow in }\mathcal{C}\text{)}\ \in\ \mathcal{L}_{\mathrm{form}},
  \qquad
  D_{\mathcal{R}}:\ \tilde{\Psi}\mapsto (\Delta y_t)\ \in\ \mathcal{L}_{\mathrm{rep}}.
  \]
  The left operator $F_{\mathcal{C}}$ is \emph{not} a discrete difference; it
  encodes how a semantic trajectory induces formal reasoning steps under
  $\mathsf{ND}$. The right operator $D_{\mathcal{R}}$ extracts local
  increments from the representation trajectory.
\end{itemize}

\subsection{Reasoning increments and curvature}
\begin{itemize}
  \item \textbf{Formal side (concepts).}
  Human reasoning flow is captured at the semantic level by $F_{\mathcal{C}}$,
  which maps a semantic curve $\gamma$ into a sequence of formally valid
  steps in $\mathcal{L}_{\mathrm{form}}$ per the rules $\mathsf{ND}$.

  \item \textbf{Representation side (vectors).}
  The local increment $\Delta y_t=y_{t+1}-y_t$ encodes a step of
  representation flow in $\mathcal{L}_{\mathrm{rep}}$.

  \item \textbf{Curvature as geometric intensity.}
  For three consecutive states $(y_{t-1},y_t,y_{t+1})$, the Menger curvature
  \[
  \kappa_t \;=\; c_M(y_{t-1},y_t,y_{t+1})
  \;=\; \frac{2\sqrt{1-\mathrm{CosSim}(u,v)^2}}{\|y_{t+1}-y_{t-1}\|},
  \quad u:=y_t-y_{t-1},\ v:=y_{t+1}-y_t,
  \]
  couples angular change with scale, providing a geometry-aware proxy for
  the “strength” of a reasoning step in the representation.
\end{itemize}

\subsection{Roadmap diagram}
\noindent The overall structure can be read from the commutative roadmap below.
Here $\mathcal{X}$ sits at the center; semantic and representation curves
live to the left and right; formal and representation-based logics sit
below. The top arrow is \emph{strict} by definition of $A$; the vertical
arrows express how each curve induces its respective notion of reasoning.

\vspace{-0.25em}
\[
\begin{tikzcd}[row sep=large, column sep=huge]
& \mathcal{X} \arrow[ld,"\Gamma"'] \arrow[rd,"\Psi"] & \\[2ex]
\mathrm{Curves}(\mathcal{C}) \arrow[d,"F_{\mathcal{C}}"'] 
  \arrow[rr,"A=\Psi\circ \Gamma^{-1}"] 
&& \mathrm{Curves}(\mathcal{R}) \arrow[d,"D_{\mathcal{R}}"] \\[2ex]
\mathcal{L}_{\mathrm{form}} && \mathcal{L}_{\mathrm{rep}}
\end{tikzcd}
\]
\vspace{-0.75em}

\paragraph{Reading guide.}
(1) Input sequences branch into a \emph{semantic} curve (left) and a
\emph{representation} curve (right). (2) The canonical alignment
$A=\Psi\circ \Gamma^{-1}$ identifies the two curves one-to-one. (3) The
semantic curve induces human, rule-constrained steps in
$\mathcal{L}_{\mathrm{form}}$ via $F_{\mathcal{C}}$, while the representation
curve induces vector increments in $\mathcal{L}_{\mathrm{rep}}$ via
$D_{\mathcal{R}}$. (4) Curvature in $\mathcal{L}_{\mathrm{rep}}$ quantifies
the geometric intensity of reasoning transitions and can be related back
to formal steps under appropriate correspondences established elsewhere
in the paper.

\section{Geometric Foundations of Reasoning Trajectories}\label{sec:geo_foundation}

In this section, we establish the geometric foundations for analyzing reasoning as smooth flows in representation space. We first construct representation trajectories as $C^1$ curves via a relaxed prefix-mask mechanism, thereby justifying smoothness as a working principle. Then, we introduce Menger curvature as a computable descriptor that couples angular deviation with distance variation, providing a principled measure of the intensity of reasoning turns.

\subsection{Continuity of Representation Trajectories}
\label{app:smooth-construction}
In this section, we provide a rigorous and explicit construction of a $C^1$ trajectory using a relaxed prefix-mask mechanism. This construction justifies our working assumption that representation trajectories are $C^1$. Note that the symbol $\mathcal{I}$ (\cref{def:repre_operator}) is defined with a slight variation compared to main paper: here it is specialized to encode positional information, while the remaining complexities of the model architecture are subsumed into a single mapping $\Phi$.

\begin{definition}[Neural Encoding View of Sentence Representation]\label{def:neural-encoding}
Let $x=(u_1,\ldots,u_n)$ be a sentence with tokens $u_i$ drawn from a vocabulary space $\mathcal{V}$. 
Define an embedding map 
\[
\mathcal{E}:\mathcal{V}\to\mathbb{R}^d,\qquad u_i \mapsto \mathcal{E}(u_i),
\]
which assigns each token a $d$-dimensional vector. 
Augmenting $\mathcal{E}(u_i)$ with positional information yields the input sequence
\[
z_0 = \big(\mathcal{E}(u_1),  \mathcal{E}(u_2), \ldots, \mathcal{E}(u_n)\big)\in (\mathbb{R}^d)^n.
\]

Let $\Phi:(\mathbb{R}^d)^n\times \mathcal{I} \to \mathbb{R}^d$ denote a contextual encoder that maps a sequence of token embeddings together with positional information to a global sentence-level representation, where $\mathcal{I}$ is the positional encoding space and $\mathcal{I}_n \subset \mathcal{I}$ denotes the set of encodings for the first $n$ positions. For a fixed $\iota=(\iota_{1},\dots, \iota_{n}) \in \mathcal{I}_n$, we define

$$
\Psi(x) \;:=\; \Phi\big(z_0, \iota_n\big) 
\;=\; \Phi\big(\mathcal{E}(u_1),\ldots,\mathcal{E}(u_n), \iota\big) \in \mathbb{R}^d.
$$

In this view, $\Psi$ subsumes both the static token embeddings and the contextual transformations carried out by the neural network. 

Hence the hidden state $y_t=\Psi(S_t)$ in \cref{def:trajectory} should be interpreted not merely as a sum of embeddings, but as the outcome of the full encoding process applied to the prefix $S_t$.

\medskip
\noindent\emph{Mask-aware realization (for later use).}
Fix a maximum length $N\ge n$ and consider the mask-aware realization of the same encoder,
\[
\Phi_{m}:(\mathbb{R}^d)^N \times \mathcal{I}_{N}\times \{0,1\}^N \to \mathbb{R}^d,
\]
such that for any length $n\le N$,
\[
\Phi_{m}\big((\mathcal{E}(u_1),\ldots,\mathcal{E}(u_n),0,\ldots,0, \iota),  \mathds{1}_{\{i\le n\}}\big):=\Phi\big(\mathcal{E}(u_1),\ldots,\mathcal{E}(u_n),  (\mathbf{1}_{\{i \le n\}}\iota_i)_{i=1}^N\big).
\]
When the mask is all ones on $\{1,\ldots,n\}$, this coincides with the above definition; when we pass a mask explicitly we will write $\Phi(\cdot, M)$.
\end{definition}

\begin{hypothesis}[Smooth Trajectory Hypothesis]\label{hyp:smooth-trajectory}
The sequence of representations \(y_t=\Psi(X_t)\) generated during a reasoning process lies on a smooth, differentiable trajectory in the embedding space.
\end{hypothesis}

\begin{definition}[Relaxed-Mask Sentence Representation]\label{def:relaxed-mask-sentence}
Let each sentence in \cref{hyp:mlrh} be $x_t=(u_{t,1},\ldots,u_{t,n_t})$ for $t=1,\ldots,T$, and let the full token stream be
\[
U_{1:N}=(u_{1,1},\ldots,u_{1,n_1}, u_{2,1},\ldots,u_{2,n_2}, \ldots, u_{T,1},\ldots,u_{T,n_T}),
\]
with total length $N=\sum_{t=1}^T n_t$ and cumulative lengths $N_t=\sum_{j=1}^t n_j$.
Introduce a continuous progress parameter $s\in[0,1]$ and a \emph{relaxed prefix mask}
\[
m_s:\{1,\ldots,N\}\to[0,1],
\]
which specifies the fractional inclusion of each token at progress $s$.  

Using the embedding map $\mathcal E$ and positional information $ \mathcal{I}_N$ from \cref{def:neural-encoding}, define the masked input sequence at progress $s$ by
$$
z_s = \big(m_s(i)\,\mathcal{E}(u_i)\big)_{i=1}^N, 
\qquad 
\iota^s = \big(m_s(i)\,\iota_i\big)_{i=1}^N.
$$
and the associated hard mask
\[
M_s(i):= \mathds{1}_{\{m_s(i)=1\}},\qquad i=1,\ldots,N.
\]
Let $k(s):= \lceil sN \rceil$, denote the number of tokens included at progress $s$.
The truncated masked sequences are then defined as

$$
z_s^{(\le k)} := (z_s(1),\ldots,z_s(k(s))) \in (\mathbb{R}^d)^{k(s)}, 
\qquad 
\iota^{s,(\le k)} := (\iota^s(1),\ldots,\iota^s(k(s))) \in \mathcal{I}^{k(s)}.
$$

With the mask-aware encoder $\Phi_{m}:(\mathbb{R}^d)^N \times \mathcal{I}^N \times \{0,1\}^N \to \mathbb{R}^d$ introduced above, the continuous representation trajectory is defined by

$$
\tilde{\Psi}(s) := \Phi_{m}(z_s,\iota^s,M_s) \in \mathbb{R}^d,
\quad \text{where } 
\Phi_{m}(z_s,\iota^s,M_s) := \Phi\big(z_s^{(\le k)}, \iota^{s,(\le k)}\big).
$$

At sentence boundaries $s_t:=N_t/N$, the hard prefix mask is recovered exactly by choosing a smooth function with flat tails (see \cref{prop:relaxed-mask-continuity}); consequently,
\[
y_t=\Psi(S_t)=\Phi\big(z_{s_t},\iota^{s_{t}},M_{s_t}\big)=\tilde{\Psi}(s_t),\qquad t=1,\ldots,T.
\]
\end{definition}

\begin{proposition}[Continuity of the Relaxed-Mask Trajectory]\label{prop:relaxed-mask-continuity}
Suppose the relaxed mask takes the form
\[
m_s(i)=g(sN-i),
\]
where $g\in C^\infty(\mathbb{R})$ satisfies $g(x)=0$ for $x\le -\delta$, $g(x)=1$ for $x\ge \delta$, with some $0<\delta<\tfrac12$ (i.e., a smoothstep/bump with flat tails). Assume the encoder $\Phi$ is $C^{1}$. Then the mapping $\tilde{\Psi}:[0,1]\to\mathbb{R}^d$ defines a $C^1$ trajectory in embedding space. 
Moreover, the discrete sentence embeddings $(y_t)_{t=1}^T$ are exactly samples of this trajectory at $s_t=N_t/N$:
\[
y_t=\tilde{\Psi}(s_t),\qquad t=1,\ldots,T.
\]

\end{proposition}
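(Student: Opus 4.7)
The plan is to establish $C^1$-regularity of $\widetilde{\Psi}$ on $[0,1]$ in two stages: first on each open interval on which the truncation length $k(s)=\lceil sN\rceil$ is constant, and then across the finitely many transition points $s=k/N$. The key geometric fact I would exploit is that, since $\delta<1/2$, the fuzzy windows $\{s:0<m_s(i)<1\}=((i-\delta)/N,\,(i+\delta)/N)$ for different positions $i$ are pairwise disjoint, so at any given $s$ at most one coordinate of the relaxed mask is non-constant. This localizes the analysis to a single ``active'' coordinate per time instant and will drive both steps.

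On a fixed interval $I_k=((k-1)/N,\,k/N)$ with $k(s)\equiv k$, each coordinate of $z_s^{(\le k)}$ and of $\iota^{s,(\le k)}$ has the form $g(sN-i)\,\mathcal{E}(u_i)$ or $g(sN-i)\,\iota_i$, which is $C^{\infty}$ in $s$ as the composition of $g\in C^{\infty}$ with an affine map. Composing with the $C^1$ encoder $\Phi$ yields $\widetilde{\Psi}|_{I_k}\in C^1$ by the chain rule, and the derivative is explicitly supported on the unique fuzzy coordinate (at most one index $i\in\{1,\ldots,k\}$ with $g'(sN-i)\ne 0$).

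At a transition point $s^*=k/N$, I would show that the newly-included position enters ``invisibly.'' Since $\delta<1/2<1$, for every $s$ in a one-sided neighborhood $(s^*,\,s^*+(1-\delta)/N)$ we have $sN-(k+1)\le -\delta$, so $m_s(k+1)\equiv 0$; hence the $(k+1)$-th entries of $z_s$ and $\iota^s$ are identically zero throughout that neighborhood. Under the standard mask-compatibility convention that $\Phi_m$ is invariant under appending a token whose embedding and positional encoding are both zero (equivalently, a position with $M_s=0$ contributes nothing), the right limits of $\widetilde{\Psi}$ and of $\widetilde{\Psi}'$ at $s^*$ coincide with the values computed from the $k$-position truncation, and therefore match the left limits coming from $I_k$. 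Gluing across the finitely many transition points $\{k/N\}_{k=1}^{N-1}$ then gives $\widetilde{\Psi}\in C^1([0,1];\mathbb{R}^d)$.

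Finally, at a sentence boundary $s_t=N_t/N$, the flat-tails property yields $m_{s_t}(i)=1$ for $i\le N_t-1$ and $m_{s_t}(i)=0$ for $i\ge N_t+1$; with a mild shift of the smoothstep so that $g(0)=1$ (or, equivalently, evaluating at $s_t+\delta/N$), the hard prefix mask is recovered exactly, so $\widetilde{\Psi}(s_t)=\Phi(\mathcal{E}(u_1),\dots,\mathcal{E}(u_{N_t}),\iota_1,\dots,\iota_{N_t})=\Psi(S_t)=y_t$. The main obstacle I anticipate is precisely this mask-compatibility at transitions: it is not implied by $\Phi\in C^1$ alone but is a standard property of attention-based encoders with padding masks, and once this convention is made explicit the remainder of the argument reduces to the chain rule on each $I_k$ together with the vanishing of the incoming coordinate across each transition.
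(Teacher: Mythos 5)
Your proof is correct and follows essentially the same route as the paper's: smoothness of the masked inputs in $s$ composed with the $C^1$ encoder via the chain rule, plus the flat-tails property of $g$ to recover the hard prefix mask at $s_t=N_t/N$. You are in fact more careful than the paper in two places the published proof glosses over: the gluing of $\tilde{\Psi}$ and its derivative across the points $s=k/N$ where $k(s)=\lceil sN\rceil$ jumps (which you handle via the zero-padding invariance of $\Phi_m$, noting the incoming coordinate is identically zero on a one-sided neighborhood), and the fact that the stated hypotheses only constrain $g$ outside $(-\delta,\delta)$ so that $m_{s_t}(N_t)=g(0)$ need not equal $1$ without the small normalization of $g$ you point out.
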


\begin{proof}
For each token $U_i$, we define the masked embedding and positional encoding as

$$
(z_s(i), \iota^s(i)) \;=\; m_s(i)\,\big(\mathcal{E}(U_i), \iota_i\big) 
= g(sN-i)\,\big(\mathcal{E}(U_i), \iota_i\big).
$$

Since $g$ is $C^\infty$ and both $\mathcal{E}(U_i)$ and $\iota_i$ are constant in $s$, each coordinate pair $(z_s(i), \iota^s(i))$ varies smoothly with $s$. Hence the entire masked sequence

$$
(z_s,\iota^s) \;=\; \big(z_s(1),\ldots,z_s(N);\;\iota^s(1),\ldots,\iota^s(N)\big)
$$

is a smooth trajectory with respect to $s$. The mask $M_s(i)= \mathds{1}_{\{m_s(i)=1\}}$ is piecewise constant in $s$ and equals the all-ones indicator on indices where $sN-i\ge\delta$, and zeros where $sN-i\le -\delta$; in particular, it is locally constant on neighborhoods that avoid the transition band $|sN-i|<\delta$.

By assumption, $\Phi$ is a composition of affine maps, matrix multiplications, LayerNorm, residual connections, softmax attention, and smooth pointwise nonlinearities. As a function of its inputs, such a network is smooth; thus, on any interval where $M_s$ is fixed, the composite map
\[
\tilde{\Psi}(s)=\Phi\big(z_s, \iota^s, M_s\big)
\]
is $C^1$ by the chain rule. 


At sentence boundaries $s_t=N_t/N$, choose $\delta<\tfrac12$ so that $g(N_t-i)=1$ for $i\le N_t$ and $g(N_t-i)=0$ for $i\ge N_t+1$. Hence $m_{s_t}(i)\in\{0,1\}$ exactly and $M_{s_t}(i)= 1_{\{i\le N_t\}}$. Substituting into the definition,
\[
\tilde{\Psi}(s_t)=\Phi\big((\mathcal E(U_1),\ldots,\mathcal E(U_{N_t}),0,\ldots,0,\iota),  \mathds{1}
_{\{i\le N_t\}}\big)
=\Psi(S_t)=y_t,
\]
which shows that the discrete embeddings $(y_t)_{t=1}^T$ are precisely samples of the continuous trajectory $\tilde{\Psi}(s)$.
\end{proof}

\begin{remark}
Since $\Phi(\cdot)$ implemented with affine maps, matrix multiplications, LayerNorm, residual connections, softmax attention, and smooth pointwise nonlinearities (e.g., GELU/SiLU/Swish), it's reasonable to assume that is $C^{1}$. If ReLU activations (or other piecewise smooth nonlinearities) are used instead of smooth ones, the mapping $\tilde{\Psi}$ remains continuous and is differentiable almost everywhere. Since this does not affect the manifold-level geometric reasoning, we idealize $\Phi$ as smooth throughout our discussion.

The construction above is merely one possible realization of a continuous and $C^{1}$ trajectory $\tilde{\Psi}(s)$.
In fact, many alternative constructions are possible.
This abundance of realizations justifies our assumption that the sentence $\Psi(X_{T})$, through its step-by-step variations, can be viewed as $T$ points lying on a smooth, differentiable curve.
On this basis, we can consistently define the notion of flow velocity in \cref{def:flow_velocity}.
\end{remark}

\subsection{Menger Curvature}\label{sec:curve}
\begin{definition}[Menger Curvature]
Let $x_1,x_2,x_3 \in \mathbb{R}^n$ be three distinct points.  
The \emph{Menger curvature} of the triple $(x_1,x_2,x_3)$ is defined as the reciprocal of the radius $R(x_1,x_2,x_3)$ of the unique circle passing through the three points:
\[
c(x_1,x_2,x_3) \;=\; \frac{1}{R(x_1,x_2,x_3)}.
\]
\end{definition}

\begin{proposition}[Computation Formula]
Let $a = \|x_2-x_3\|$, $b = \|x_1-x_3\|$, and $c = \|x_1-x_2\|$.  
Denote by $\Delta(x_1,x_2,x_3)$ the area of the triangle spanned by the three points.  
Then the circumradius $R$ and the Menger curvature $c(x_1,x_2,x_3)$ are given by
\[
R(x_1,x_2,x_3) \;=\; \frac{abc}{4 \Delta(x_1,x_2,x_3)}, 
\qquad
c(x_1,x_2,x_3) \;=\; \frac{4 \Delta(x_1,x_2,x_3)}{abc}.
\]
\end{proposition}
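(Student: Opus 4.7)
The plan is to reduce the ambient geometry to two dimensions and then derive the classical circumradius formula from the law of sines. Since any three points in $\mathbb{R}^n$ lie in an affine subspace of dimension at most $2$, I would restrict attention to the affine plane $\pi$ spanned by $x_1, x_2, x_3$. Within $\pi$ the notions of circumscribed circle, side lengths, and signed area agree with their usual Euclidean definitions, so the circumradius $R(x_1,x_2,x_3)$ (appearing in the definition of Menger curvature) coincides with the circumradius of the planar triangle $T \subset \pi$ with vertices $x_1, x_2, x_3$.

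Next I would split into two cases. If the three points are collinear then $\Delta(x_1,x_2,x_3) = 0$; the unique "circle" through them degenerates to a line, which can be interpreted as $R = \infty$ and hence $c = 0$. The formula $c = 4\Delta/(abc) = 0$ holds trivially, and the identity $R \cdot 4\Delta = abc$ is the degenerate equality $\infty \cdot 0 = \text{positive}$ interpreted in the limiting sense. In the non-degenerate case, let $A$ denote the interior angle of $T$ at vertex $x_1$ (opposite the side of length $a = \|x_2 - x_3\|$). By the law of sines applied to the triangle inscribed in its circumscribed circle of radius $R$,
\[
\frac{a}{\sin A} \;=\; 2R.
\]
On the other hand, the standard two-sides-and-included-angle formula for the area gives
\[
\Delta(x_1,x_2,x_3) \;=\; \tfrac{1}{2}\, b\, c\, \sin A,
\]
so that $\sin A = 2\Delta/(bc)$.

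Substituting this expression for $\sin A$ into the law of sines yields
\[
2R \;=\; \frac{a}{2\Delta/(bc)} \;=\; \frac{abc}{2\Delta},
\]
and hence $R(x_1,x_2,x_3) = abc/(4\Delta(x_1,x_2,x_3))$. Taking reciprocals and invoking the definition of Menger curvature gives $c(x_1,x_2,x_3) = 4\Delta(x_1,x_2,x_3)/(abc)$, which completes the argument. The only step with any subtlety is the reduction to the planar case and the interpretation of the collinear degeneracy; the main computation itself is just the standard derivation of the circumradius formula via the law of sines, so I do not expect any real obstacle beyond being careful about the degenerate configuration.
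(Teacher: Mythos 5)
Your proposal is correct and takes essentially the same route as the paper: the paper simply cites the classical identity $R = abc/(4\Delta)$ and takes reciprocals, while you additionally supply its standard derivation (reduction to the spanning plane, law of sines, and the two-sides-and-included-angle area formula) and handle the collinear degeneracy. No gaps; your version is just a more self-contained rendering of the same argument.
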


\begin{proof}
The formula follows from classical Euclidean geometry: for a triangle with side lengths $a,b,c$ and area $\Delta$, the circumradius satisfies $R = \tfrac{abc}{4\Delta}$.  
Taking the reciprocal yields the Menger curvature.  
\end{proof}

\begin{figure}[h]
\centering
\begin{tikzpicture}[scale=2]
  \draw[thick] (0,0) circle(1);
  
  \coordinate (A) at (1,0);
  \coordinate (B) at (-0.2,0.98);
  \coordinate (C) at (-0.7,-0.72);
  
  \draw[thick,blue] (A)--(B)--(C)--cycle;
  
  \fill (A) circle (1pt) node[below right] {$x_1$};
  \fill (B) circle (1pt) node[above] {$x_2$};
  \fill (C) circle (1pt) node[below left] {$x_3$};
  
  \draw[->,red] (0,0)--(A) node[midway,below] {$R$};
  \draw[->,red] (0,0)--(B);
  \draw[->,red] (0,0)--(C);
  
  \fill (0,0) circle(1pt) node[below left] {Center};
\end{tikzpicture}
\caption{Circumcircle through three points $x_1,x_2,x_3$, with radius $R$ and Menger curvature $1/R$.}
\end{figure}
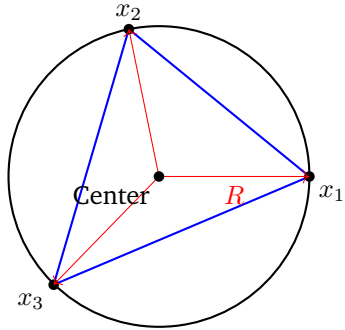

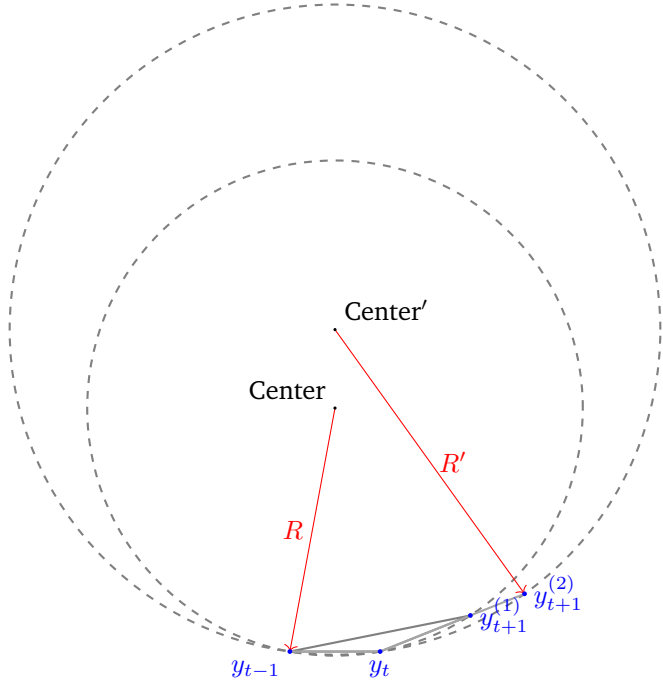
\begin{figure}[t]
\centering
\begin{tikzpicture}[scale=0.6]

  \coordinate (A) at (-2,0);      
  \coordinate (B) at (0,0);       
  \coordinate (C1) at (2,0.8);    
  \coordinate (C2) at (3.2,1.28); 

  \tkzCircumCenter(A,B,C1)\tkzGetPoint{O}
  \tkzCircumCenter(A,B,C2)\tkzGetPoint{Op}

  \tkzDrawCircle[thick,gray,dashed](O,A)      
  \tkzDrawCircle[thick,gray,dashed](Op,A) 

  \draw[thick,gray] (A)--(B)--(C1)--cycle;
  \draw[thick,gray!70] (A)--(B)--(C2);

  \draw[->,red] (O) -- (A)  node[midway,left] {$R$};
  \draw[->,red] (Op) -- (C2) node[midway,right] {$R'$};

  \fill[blue] (A) circle (1.5pt) node[below left] {$y_{t-1}$};
  \fill[blue] (B) circle (1.5pt) node[below] {$y_t$};
  \fill[blue] (C1) circle (1.5pt) node[right] {$y_{t+1}^{(1)}$};
  \fill[blue] (C2) circle (1.5pt) node[right] {$y_{t+1}^{(2)}$};

  \fill (O)  circle (1pt) node[above left] {Center};
  \fill (Op) circle (1pt) node[above right] {Center$'$};

\end{tikzpicture}
\caption{Two circumcircles through $\{y_{t-1},y_t,y_{t+1}^{(1)}\}$ and $\{y_{t-1},y_t,y_{t+1}^{(2)}\}$, with radii $R$ and $R'$. Here $y_{t+1}^{(1)}$ and $y_{t+1}^{(2)}$ lie on the same ray from $y_t$.}
\label{fig:menger-two-circles}
\end{figure}

\begin{proposition}[Menger curvature from three consecutive states]\label{pro:men_cur}
Let $y_{t-1},y_t,y_{t+1}\in\mathbb R^d$ be three distinct points and set

$$
u:=y_t-y_{t-1},\qquad v:=y_{t+1}-y_{t}.
$$

Write the side lengths

$$
a=\|u\|,\quad b=\|v\|,\quad c=\|v-u\|=\|y_{t+1}-y_{t-1}\|.
$$

The Menger curvature of the triple $(y_{t-1},y_t,y_{t+1})$ equals

$$
c_M(y_{t-1},y_t,y_{t+1})
= \frac{4\,\Delta(y_{t-1},y_t,y_{t+1})}{abc}
= \frac{2\sqrt{1-\mathrm{CosSim}(u,v)^2}}{\|y_{t+1}-y_{t-1}\|},
$$

where $\mathrm{CosSim}(u,v):=\dfrac{\langle u,v\rangle}{\|u\|\,\|v\|}$. (If the three points are collinear, $c_M:=0$.)
\end{proposition}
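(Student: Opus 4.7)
The plan is to reduce everything to the identity $4\Delta = 2ab\sqrt{1-\mathrm{CosSim}(u,v)^2}$ and then invoke the circumradius formula already established in the preceding proposition. The first equality $c_M = 4\Delta/(abc)$ is just the definition of Menger curvature combined with the classical relation $R = abc/(4\Delta)$; so all the work lies in rewriting $\Delta$ in terms of the two consecutive displacement vectors $u=y_t-y_{t-1}$ and $v=y_{t+1}-y_t$, and then recognizing $\|y_{t+1}-y_{t-1}\|$ as the remaining side length.

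Concretely, I would proceed in three short steps. First, I compute the area of the triangle with vertices $y_{t-1},y_t,y_{t+1}$ by taking $y_t$ as the base vertex: the two edges emanating from $y_t$ are $-u$ and $v$, so $\Delta = \tfrac{1}{2}\|u\|\,\|v\|\sin\alpha$, where $\alpha$ is the interior angle at $y_t$. Second, I observe that $\alpha$ and the angle $\theta$ between $u$ and $v$ are supplementary (the turn between consecutive displacements is $\pi-\alpha$), so $\sin\alpha = \sin\theta$ and hence
\[
\Delta \;=\; \tfrac{1}{2}\,ab\,\sin\theta \;=\; \tfrac{1}{2}\,ab\,\sqrt{1-\cos^2\theta} \;=\; \tfrac{1}{2}\,ab\,\sqrt{1-\mathrm{CosSim}(u,v)^2},
\]
using $\cos\theta = \langle u,v\rangle/(\|u\|\|v\|) = \mathrm{CosSim}(u,v)$. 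Third, I substitute this into $c_M = 4\Delta/(abc)$; the factor $ab$ cancels, leaving
\[
c_M \;=\; \frac{2\sqrt{1-\mathrm{CosSim}(u,v)^2}}{c} \;=\; \frac{2\sqrt{1-\mathrm{CosSim}(u,v)^2}}{\|y_{t+1}-y_{t-1}\|},
\]
which is precisely the claimed formula. The degenerate (collinear) case is handled by noting $\mathrm{CosSim}(u,v)=\pm 1$, so the numerator vanishes and the formula returns $0$, consistent with the convention $c_M:=0$.

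The only subtlety worth flagging is the angle convention: it is tempting to conflate the triangle's interior angle at $y_t$ with the angle between $u$ and $v$, but they are in fact supplementary. Since only $\sin$ enters the area formula and $\sin$ is invariant under the supplement, this has no effect on the final identity, but I would state it explicitly so the reader is not confused by the apparent sign mismatch. I would also mention in passing that the side-length expression $\|v-u\|$ in the hypothesis should be read as $\|u+v\|=\|y_{t+1}-y_{t-1}\|$ (since $y_{t+1}-y_{t-1}=u+v$), a harmless typographical slip that does not affect the proof because only the norm enters. Beyond this bookkeeping, there is no real obstacle: the argument is a two-line computation once the area is expressed via the sine of the angle between $u$ and $v$.
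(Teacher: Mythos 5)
Your proof is correct and follows essentially the same route as the paper's: both reduce to $c_M = 4\Delta/(abc)$ via the circumradius formula $R=abc/(4\Delta)$ and then show $\Delta=\tfrac12 ab\sqrt{1-\mathrm{CosSim}(u,v)^2}$ --- you via the sine of the interior angle at $y_t$ (noting it is supplementary to the angle between $u$ and $v$), the paper via the Gram determinant $\tfrac12\sqrt{\|u\|^2\|v\|^2-\langle u,v\rangle^2}$, which is the same identity. Your side remark is also well taken: with $u=y_t-y_{t-1}$ and $v=y_{t+1}-y_t$ the third side is $\|u+v\|=\|y_{t+1}-y_{t-1}\|$ rather than $\|v-u\|$, a slip the paper's own proof carries through uncorrected.
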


\begin{proof}
By classical Euclidean geometry, for a triangle with side lengths $a,b,c$ and area $\Delta$, the circumradius satisfies $R=\dfrac{abc}{4\Delta}$. The Menger curvature is the reciprocal $c_M=1/R=\dfrac{4\Delta}{abc}$.

It remains to express $\Delta$ in terms of $u$ and $v$. The (unsigned) area of the triangle spanned by $u$ and $v$ can be written in a dimension-independent way via the Gram determinant:

$$
\Delta=\frac12\|u\wedge v\|
     =\frac12\sqrt{\det\!\begin{pmatrix}\langle u,u\rangle & \langle u,v\rangle\\[2pt]
                                         \langle v,u\rangle & \langle v,v\rangle\end{pmatrix}}
     =\frac12\sqrt{\|u\|^2\|v\|^2-\langle u,v\rangle^2}.
$$

Substituting $a=\|u\|$, $b=\|v\|$, $c=\|v-u\|$ into $c_M=\dfrac{4\Delta}{abc}$ gives

$$
c_M=\frac{2\sqrt{\|u\|^2\|v\|^2-\langle u,v\rangle^2}}{\|u\|\,\|v\|\,\|v-u\|}.
$$

Divide the numerator and denominator by $\|u\|\,\|v\|$ and denote
$s:=\mathrm{CosSim}(u,v)=\dfrac{\langle u,v\rangle}{\|u\|\,\|v\|}.$
Then

$$
c_M=\frac{2\sqrt{1-s^2}}{\|v-u\|}
   =\frac{2\sin\theta}{c},
$$

where $\theta$ is the angle between $u$ and $v$ (so $\sin\theta=\sqrt{1-s^2}$). If the three points are collinear, $\Delta=0$ and hence $c_M=0$, consistent with the convention. This proves the claim.
\end{proof}

\begin{remark}
As illustrated in Figure~\ref{fig:menger-two-circles}, using the Menger
curvature instead of cosine similarity is significant. Cosine similarity
only depends on the angle at $y_t$, so the two triples
$\{y_{t-1},y_t,y_{t+1}^{(1)}\}$ and $\{y_{t-1},y_t,y_{t+1}^{(2)}\}$
would look identical. In contrast, their circumradii $R$ and $R'$ are
different, hence the Menger curvatures distinguish two different
curvature regimes. This demonstrates how Menger curvature captures both
angle and length information, enabling discrimination that cosine
similarity alone cannot provide.
\end{remark}

\newpage
\section{Data Generation}\label{sec:datagen}

We provide the exact prompt templates and the representative sampled data instances used in our data generation process. The two‑stage pipeline is run with GPT‑5.

\subsection{Prompts for Data Generation}

The following prompts are used for abstract logical templates construction and domain-specific and language-specific rewriting.

\begin{tcolorbox}[title = {Prompt for Logic Pattern Generation}]

You are a formal logic pattern generator.

Goal: Create an abstract, domain-agnostic reasoning sequence of exactly N steps, written in symbolic form, using standard propositional/first-order logic notation.

\textbf{Strict output format:}
\begin{itemize}
    \item Exactly N lines, each line starts with a bracketed index and a single formula or conclusion, e.g.:
    \begin{verbatim}
      [1] A -> B
      [2] B -> C
      [3] C -> D
      [4] (D & E) -> F
      [5] forall x(H(x) -> J(x))
      [6] A
      [7] E
      [8] H(a)
      [9] D  (from [1-3] and [6])
      [10] F & J(a)  (from [4],[7],[5],[8],[9])
    \end{verbatim}
    \item Use only symbols from: $\neg, \land, \lor, \to, \leftrightarrow, \forall, \exists$, parentheses, predicate letters with uppercase (A,B,C,…) and predicate symbols like H(x), J(x).
    \item You may include brief justifications at the end of lines in parentheses referencing earlier step indices (e.g., (from [2] and [5])).
    \item The sequence must be internally coherent (later steps can be derived from earlier ones), but no proof of a fixed target is required.
    \item No extra commentary before or after the lines. No natural-language sentences.
\end{itemize}

\textbf{Parameters (provided by caller):}
\begin{itemize}
    \item N: number of steps to output.
    \item logic: a label for this abstract logic (optional).
\end{itemize}

N = \{N\}

logic = \{logic\}

Now produce exactly N lines.
\end{tcolorbox}

\begin{tcolorbox}[title = {Prompt for Reasoning Rewriter}]

You are a reasoning rewriter.

\textbf{Task:} Given an abstract N-step reasoning scaffold (formal symbolic lines) and a target topic, rewrite the scaffold into a topic-specific natural-language reasoning sequence with exactly the same number of steps and the same dependency structure.

\textbf{Inputs (provided by caller):}
\begin{itemize}
    \item \texttt{Topic}: the target domain (e.g., weather, software).
    \item \texttt{Abstract Steps (1..N)}: the neutral scaffold, numbered 1..N.
    \item \texttt{N}: the total number of steps.
\end{itemize}

\textbf{Output requirements:}
\begin{itemize}
    \item Produce exactly N steps, each line begins with the same bracketed index as the abstract: \texttt{[1] ...} to \texttt{[N] ...}.
    \item Keep step count and ordering identical to the abstract. Do not merge, split, add, or remove steps.
    \item Preserve the logical dependencies: if abstract step k enables k+1, your rewrite must preserve that relationship in the topic.
    \item Use concrete domain terms appropriate to the topic, but keep sentences concise and precise.
    \item No extra commentary before or after the steps.
\end{itemize}

\textbf{Multilingual mode (when \texttt{Languages:} are specified by the caller):}
\begin{itemize}
    \item Create a separate section for each requested language code.
    \item Each section starts with a header line \texttt{=== <code> ===} (e.g., \texttt{=== en ===}).
    \item Under each header, write the N steps with bracketed indices \texttt{[1] .. [N]} in that language.
    \item Keep the content aligned across languages (same meaning per step index).
\end{itemize}

\textbf{Inputs you will receive:}

Topic: \{topic\}

Abstract Steps (1..N):
\{ABSTRACT\_STEPS\}

N = \{N\}

Now perform the rewrite.
\end{tcolorbox}

\subsection{Data Examples}
Table~\ref{tab:logic_A_compact_revised} presents a 9-step logical scaffold from our dataset. We illustrate its instantiation in two distinct domains, weather and finance, providing the corresponding statements in both English (EN) and German (DE).

\begin{table}[ht!]
\centering
\small
\caption{Logic Example (9-Step) with Weather and Finance Topics in English and German}
\label{tab:logic_A_compact_revised}
\begin{tblr}{
    width = \linewidth,
    colspec = {@{} Q[2.9cm,l] X[l] X[1.2, l] @{}},
    rowsep = 2pt,
    hlines,
    row{1} = {font=\bfseries},
}
    Abstract Logic & Topic: Weather & Topic: Finance \\
    
    $[1] \ A \to B$ & 
    \textbf{EN:} If moisture converges over the city, then thunderclouds develop. \par
    \textbf{DE:} Wenn über der Stadt Feuchte konvergiert, dann bilden sich Gewitterwolken. & 
    \textbf{EN:} If the firm’s interest coverage ratio exceeds 3.0x, then the firm is deemed able to meet interest obligations. \par
    \textbf{DE:} Wenn die Zinsdeckungskennzahl des Unternehmens über 3,0x liegt, dann gilt das Unternehmen als fähig, Zinszahlungen zu leisten. \\

    $[2] \ B \to C$ & 
    \textbf{EN:} If thunderclouds develop, then heavy rain occurs. \par
    \textbf{DE:} Wenn sich Gewitterwolken bilden, dann tritt starker Regen auf. & 
    \textbf{EN:} If the firm is deemed able to meet interest obligations, then the bank will approve a new term loan. \par
    \textbf{DE:} Wenn das Unternehmen als fähig gilt, Zinszahlungen zu leisten, dann wird die Bank ein neues Laufzeitdarlehen genehmigen. \\

    $[3] \ \forall x(H(x) \to J(x))$ & 
    \textbf{EN:} For any location x, if a cold front passes x, then temperatures drop at x. \par
    \textbf{DE:} Für jeden Ort x gilt: Wenn eine Kaltfront x überquert, dann sinkt dort die Temperatur. & 
    \textbf{EN:} For any security x, if x is a U.S. Treasury, then x is acceptable as repo collateral. \par
    \textbf{DE:} Für jedes Wertpapier x gilt: Wenn x eine US-Staatsanleihe ist, dann ist x als Repo-Sicherheit zulässig. \\

    $[4] \ H(a)$ & 
    \textbf{EN:} A cold front is passing the airport. \par
    \textbf{DE:} Eine Kaltfront überquert den Flughafen. & 
    \textbf{EN:} Bond A is a U.S. Treasury. \par
    \textbf{DE:} Anleihe A ist eine US-Staatsanleihe. \\

    $[5] \ A$ & 
    \textbf{EN:} Moisture is converging over the city. \par
    \textbf{DE:} Über der Stadt herrscht Feuchtekonvergenz. & 
    \textbf{EN:} The firm’s interest coverage ratio exceeds 3.0x. \par
    \textbf{DE:} Die Zinsdeckungskennzahl des Unternehmens liegt über 3,0x. \\

    $[6] \ B \text{ (from [1], [5])}$ & 
    \textbf{EN:} From [1] and [5], thunderclouds develop. \par
    \textbf{DE:} Aus [1] und [5] folgt, dass sich Gewitterwolken bilden. & 
    \textbf{EN:} The firm is deemed able to meet interest obligations (from [1] and [5]). \par
    \textbf{DE:} Daher gilt das Unternehmen als fähig, Zinszahlungen zu leisten (aus [1] und [5]). \\

    $[7] \ C \text{ (from [2], [6])}$ & 
    \textbf{EN:} From [2] and [6], heavy rain occurs. \par
    \textbf{DE:} Aus [2] und [6] folgt, dass starker Regen auftritt. & 
    \textbf{EN:} The bank will approve a new term loan (from [2] and [6]). \par
    \textbf{DE:} Daher wird die Bank ein neues Laufzeitdarlehen genehmigen (aus [2] und [6]). \\

    $[8] \ J(a) \text{ (from [3], [4])}$ & 
    \textbf{EN:} From [3] and [4], temperatures drop at the airport. \par
    \textbf{DE:} Aus [3] und [4] folgt, dass am Flughafen die Temperatur sinkt. & 
    \textbf{EN:} Bond A is acceptable as repo collateral (from [3] and [4]). \par
    \textbf{DE:} Daher ist Anleihe A als Repo-Sicherheit zulässig (aus [3] und [4]). \\

    $[9] \ C \land J(a) \text{ (from [7], [8])}$ & 
    \textbf{EN:} From [7] and [8], heavy rain occurs and temperatures drop at the airport. \par
    \textbf{DE:} Aus [7] und [8] folgt: Es tritt starker Regen auf und am Flughafen sinkt die Temperatur. & 
    \textbf{EN:} The bank will approve a new term loan and Bond A is acceptable as repo collateral (from [7] and [8]). \par
    \textbf{DE:} Somit wird die Bank ein neues Laufzeitdarlehen genehmigen und Anleihe A ist als Repo-Sicherheit zulässig (aus [7] und [8]). \\

\end{tblr}
\end{table}





\end{document}